\title[Short Title]{Massively Parallel Expectation Maximization For Approximate Posteriors}
\renewcommand{\L}{\mathcal{L}}
\newcommand{\momtrue}{m^\text{true}}
\newcommand{\momone}{m^\text{one iter}}
\newcommand{\momonet}{\mt^\text{one iter}}
\newcommand{\etaone}{\eta^\text{one iter}}
\newcommand{\zt}{{\tilde{z}}}
\newcommand{\mt}{{\tilde{m}}}
\newcommand{\mut}{{\tilde{\mu}}}
\newcommand{\sigmat}{{\tilde{\sigma}}}
\newcommand{\bracket}[3]{\left#1 #3 \right#2}
\newcommand{\mbracket}[5]{\left#1 #4 \middle#2 #5 \right#3}
\renewcommand{\b}{\bracket{(}{)}}
\newcommand{\bc}{\mbracket{(}{\vert}{)}}
\newcommand{\sqb}{\bracket{[}{]}}
\newcommand{\E}[1][]{\mathrm{E}_{#1}\sqb}
\newcommand{\Var}[1][]{\mathrm{Var}_{#1}\sqb}
\newcommand{\bareP}{\operatorname{P}}
\renewcommand{\P}[1][]{\bareP_{#1}\b}
\newcommand{\Pc}[1][]{\bareP_{#1}\bc}
\newcommand{\bareQ}{{\operatorname{Q}}}
\newcommand{\Q}[1][]{\bareQ_{#1}\b}
\newcommand{\Qc}[1][]{\bareQ_{#1}\bc}
\newcommand{\Qmp}{\bareQ_{\mp}\b}
\newcommand{\Qmpc}{\bareQ_{\mp}\bc}
\newcommand{\I}{\mathbf{I}}
\newcommand{\m}{\boldsymbol{\mu}}
\renewcommand{\k}{\mathbf{k}}
\newcommand{\qa}[1]{{\textrm{qa}\b{#1}}}
\newcommand{\Dkl}{\operatorname{D}_\text{KL}\mbracket{(}{\Vert}{)}}
\renewcommand{\mp}{\textrm{MP}}
\newcommand{\glob}{\textrm{global}}
\newcommand{\post}{\textrm{post}}
\newcommand{\Pe}{\mathcal{P}}
\renewcommand{\L}{\mathcal{L}}
\newcommand{\Pmp}{\Pe_\mp}
\newcommand{\Pglob}{\Pe_\glob}
\renewcommand{\textmp}{massively parallel}
\newcommand{\Normal}{\mathcal{N}}
\newcommand{\citevi}[1][]{\citep[#1][]{jordan1999introduction,wainwright2008graphical,kingma2013auto,rezende2014stochastic,blei2017variational,nguyen2018variational,zhang2018advances,kingma2019introduction,gayoso2021joint}}
\newcommand{\citerws}[1][]{\citep[#1][]{bornschein2015reweighted,le2020revisiting}}
\newcommand{\citeem}[1][]{\citep[#1][]{dempster1977maximum,neal1998view}}
\newcommand{\acro}{QEM}
\title{Massively Parallel Expectation Maximization For Approximate Posteriors}
\author{\Name{Thomas Heap} \\
       \addr University of Bristol
       \AND
       \Name{Sam Bowyer} \\
       \addr University of Bristol
       \AND
       \Name{Laurence Aitchison} \\
       \addr University of Bristol}
\begin{document}
\maketitle

\begin{abstract}
Bayesian inference for hierarchical models can be very challenging.
MCMC methods have difficulty scaling to large models with many observations and latent variables.
While variational inference (VI) and reweighted wake-sleep (RWS) can be more scalable, they are gradient-based methods and so often require many iterations to converge.
%The current gold standard method is HMC, which often takes a long time to converge. Alternatively there are variational methods which learn an approximate posterior, but VI approximate posteriors are typically overconfident (i.e.\ the posteriors are too narrow). 
Our key insight was that modern massively parallel importance weighting methods \citep{mom} give fast and accurate posterior moment estimates, and we can use these moment estimates to rapidly learn an approximate posterior.
Specifically, we propose using expectation maximization to fit the approximate posterior, which we call QEM.
The expectation step involves computing the posterior moments using high-quality massively parallel estimates from \citet{mom}. The maximization step involves fitting the approximate posterior using these moments, which can be done straightforwardly for simple approximate posteriors such as Gaussian, Gamma, Beta, Dirichlet, Binomial, Multinomial, Categorical, etc. (or combinations thereof).
We show that QEM is faster than state-of-the-art, massively parallel variants of RWS and VI, and is invariant to reparameterizations of the model that dramatically slow down gradient based methods.
%We demonstrate our method on hierarchical regression and timeseries models, and compare favourably against particle filtering for timeseries.

% Adaptive importance sampling (AIS) refers to a general family of techniques in which a proposal distribution is iteratively updated via importance sampling in order to match some target distribution --- commonly a Bayesian posterior distribution.
% Whilst AIS methods often come with theoretical guarantees of convergence, in practice the speed of this convergence is limited by the extremely large number of importance samples required to obtain good estimates of the target distribution.
% Methods such as Adaptive Multiple Importance Sampling (AMIS) and Adaptive Population Importance Sampling (APIS) attempt to deal with this by retaining either all or some of the past importance weights to form a mixture on which to perform the next round of importance sampling, greatly increasing the required memory.
% We introduce an AIS method called \acro{} that uses fast, high-quality posterior moment estimates obtained via \textmp{} importance sampling (MPIW) in order to obviate the need for storing importance samples from previous iterations. 
% By drawing $K$ samples per $N$ latent variable and reasoning about all possible $K^N$ combinations of samples, MPIW in effect utilises an exponential number of importance samples to 

% However, by obtaining 
\end{abstract}

\section{Introduction}

Hierarchical models are powerful statistical tools for understanding real-world, structured data. For instance, a hierarchical model could be used to analyze student test scores, where students are nested within classrooms and classrooms are nested within schools.
However, using Bayesian inference in these hierarchical models to estimate the underlying parameters or latent variables from the data can be computationally challenging.
MCMC methods such as HMC \citep{duane1987hybrid, neal2012mcmc} have difficulty scaling to large hierarchical models, so there is a large literature applying variational inference \citep[VI; ][]{jordan1999introduction, blei2017variational,aitchison2019tensor,agrawal2021amortized,geffner2022variational} and reweighted wake-sleep \citep[RWS; ][]{bornschein2015reweighted,le2020revisiting,ml} as an alternative.

However, with VI and RWS, in the case of non-conjugate models the approximate posterior is often learned using gradient descent on an objective function \citep{kingma2013auto,rezende2014stochastic,bornschein2015reweighted,le2020revisiting, wingate2013automated, titsias2014doubly}.
Gradient descent can require many iterations to converge, requires careful hyperparameter tuning, and is sensitive to details of how the model is parameterized (see Appendix~\ref{app:reparam}).

%Two traditional approaches to Bayesian inference in hierarchical models are variational inference (VI) \citep{agrawal2021amortized} and MCMC methods such as HMC \citep{betancourt2015hamiltonian}.
%However, both of these methods have issues.
%MCMC methods such as HMl do converge to the true posteriors, but only as we take the limit of infinitely many steps \citep{neal2011mcmc}.
%Indeed, HMC can require many steps to give useful inferences, and is subject to other issues including divergences \citep{betancourt2015hamiltonian}.
%While VI is faster than HMC, it gives no guarantees about convergence to the true posterior, and indeed is usually believed to give narrow, overconfident posteriors \citep{turner2011two}.

%Additionally, such tuning is a potentially problematic resemblance to p-hacking.
%Such tuning is categorically \textit{not} p-hacking as the goal was not to target particular p-values, and indeed the authors of \dots reported the surprising result that mask-mandates appeared not to affect COVID transmission.
%That said, 
%While not p-hacking (as the goal was to find a model that mixed well and avoided divergences), there certainly is a risk

% In this paper, we set out to develop a Bayesian inference method for hierarchical models which is faster than both VI and HMC, while at the same time mitigating VI's issues, such as inefficient gradient based training and overconfident approximate posteriors.
In this paper, we set out to develop a Bayesian inference method for hierarchical models which is faster than both VI and RWS, while at the same time mitigating those methods' issues, such as inefficient gradient-based training and overconfident approximate posteriors.
To develop this method, we began with massively parallel importance weighting (MPIW) \citep{mom}.
To motivate MPIW, consider a model with data $x$, latent variables $z'$, true posterior, $\Pc{z'}{x}$ and approximate posterior, $\Q{z'}$.
(Note that we use $z'$ rather than $z$ as we will use $z$ later to represent the collection of samples of the latents used in importance weighting.)
\citet{chatterjee2018sample} showed that for importance weighting, the number of samples from the proposal, $\Q{z'}$, required to get a good estimate of the true posterior, $\Pc{z'}{x}$, scales as $\exp\b{\Dkl{\Pc{z'}{x}}{\Q{z'}}}$.
This exponential scaling of the number of samples looks problematic.
Indeed, \citet{mom} note that the KL-divergence scales linearly with the number of latent variables, $n$, implying that the number of samples required scales as $\exp\b{n}$, which is intractable in all but the smallest models.
Massively parallel methods resolve this issue by drawing $K$ samples for each of the $n$ latent variables, and explicitly reweighting all possible $K^n$ combinations.
MPIW is able to be efficient because they exploit conditional independencies in the generative model and proposal/approximate posterior.
These conditional independencies are usually understood by writing both distributions as a ``Bayes net''.
Thus, MPIW in effect has the required exponential number of importance samples, and can give good posterior moment estimates even in large models.

% However, for MPIW to perform well, we need to choose approximate posteriors.
% One approach is to use VI or RWS.

% Critically, this combination mitigates the disadvantages of each method individually, giving a method that is far more than the sum of its parts.
% The
% In particular, we merge two 

% We started from massively parallel importance weighting (MPIW).
However, MPIW is ultimately just an importance sampling method, i.e.\ it allows you to sample from a single, fixed proposal then reweight to approximate the true posterior. 
To work well, all importance sampling methods require a proposal distribution that is reasonably close to the true posterior.
This motivates considering iterative schemes that learn better proposals.
%Our goal is thus to develop an iterative scheme that is able to learn a proposal that is somehow better matched to the true posterior.
Indeed, pre-existing methods such as VI \citevi{} and reweighted wake-sleep \citerws{} do just this---using gradient ascent to update the proposal (which is usually called an approximate posterior in the VI/RWS context).
% However, both of these methods use gradient ascent to update the proposal (which is usually called an approximate posterior in the VI/RWS context), and gradient ascent can be slow, and require tuning of hyperparameters such as a learning rate.

To develop an alternative, we noticed that we could use MPIW posterior moment estimates to learn the proposal without needing gradient descent.
We refer to this new method as QEM, as it resembles the expectation maximization algorithm \citeem{}. 
The key difference is that traditional EM optimizes the parameters of the \textit{prior}, whereas QEM optimizes the parameters of the \textit{approximate posterior} (the approximate posterior is denoted Q in VI and RWS, hence the QEM acronym).
Specifically, in the expectation step (E-step) we use MPIW to compute posterior moments, and in the maximization step (M-step) we use these moments to update the approximate posteriors.
Note that the M-step is straightforward for simple approximate posteriors such as Gaussian, Gamma, Beta, Dirichlet, Binomial, Multinomial, Categorical, etc.
Critically, standard EM differs from QEM because in standard EM, there is no approximate posterior.
Instead, standard EM learns parameters of the model $\theta$ which may appear in the  prior $\P[\theta]{z'}$ or in the likelihood, $\P[\theta]{x|z'}$.

We show that \acro{} converges more rapidly than VI or RWS, is reparameterization invariant, gives moment estimates that improve over VI, and are comparable to those in RWS.

\section{Related work}
Of course, key related work arises in the small but growing literature on \textmp{} methods, including massively parallel VI \citep{aitchison2019tensor} and massively parallel RWS \citep{ml}, along with massively parallel importance weighting \citep{mom}.
This work all uses the key massively parallel idea---drawing $K$ samples of each of the $n$ latent variables and reasoning over all $K^n$ combinations---in different contexts.

Massively parallel VI \citep{aitchison2019tensor} is an extension of the importance weighted autoencoder \citep[IWAE;][]{burda2016importance}.
IWAE learns a good approximate posterior by doing gradient ascent on a multi-sample variant of the ELBO (Eq.~\ref{eq:Pglobal}) that gives tighter variational bounds. % (Eq.~\ref{eq:iwae_elbo}).
However, IWAE only uses a small number of importance samples from the approximate posterior to compute the ELBO.
Unfortunately, given the results from \citet{chatterjee2018sample} discussed in the introduction, we expect the number of importance samples required to accurately approximate the true posterior to scale exponentially in number of latent variables.
This suggests that in larger models, IWAE will have far too few samples to approximate the true posterior.
Massively parallel VI resolves this issue by drawing $K$ samples for each of the $n$ latent variables, and computing the ELBO by averaging over all $K^n$ combinations (Eq.~\ref{eq:Prmp}). % (Sec.~\ref{sec:back:mpvi}).
\citet{aitchison2019tensor} showed that massively parallel VI converges faster than standard VI, and gives considerably tighter bounds on the marginal likelihood.

Likewise, \textmp{} RWS \citep{ml} is a \textmp{} extension of RWS \citerws{}.
To understand RWS, note that the ideal way to learn the approximate posterior would be maximum likelihood on samples from the true posterior.
Of course, we do not have samples from the true posterior, so what else can we do?
One approach is to approximate the true posterior by sampling from the approximate posterior and reweighting. % (see Sec.~\ref{sec:back:mprws}).
Of course, this procedure requires a good importance weighted estimate of the true posterior.
However, traditional RWS uses only a small number of approximate posterior samples, which, as previously discussed, may not suffice in settings with a large number of latent variables.
% , and \citet{chatterjee2018sample} indicated that to get a good importance weighted estimate of the true posterior, the number of samples should scale exponentially in the number of latent variables.
Massively parallel RWS \citep{ml} resolves this issue by again drawing $K$ samples of each latent variable and considering all $K^n$ combinations.

Importantly, both massively parallel VI and RWS improve the approximate posterior by doing gradient ascent, which can be slow and requires tuning hyperparameters such as the learning rate.
In contrast, QEM introduced here uses massively parallel moment estimates to directly update the approximate posterior, without using gradients.

Massively parallel importance weighting (MPIW) was introduced in \citet{mom}.
We use MPIW to compute moment estimates for QEM.
Importantly, however, QEM and MPIW are doing something quite different.
MPIW is just an importance weighting method, so it allows you to sample from a proposal/approximate posterior, then reweight to approximate the true posterior.
Importantly, the proposal/approximate posterior in importance sampling is fixed.
In contrast, QEM is a method to learn a better approximate posterior.

As QEM learns a better approximate posterior, it can be understood as an adaptive importance sampling/weighting method \citep{ortiz2000adaptive,liu2001theory,pennanen2004adaptive,rubinstein2004cross,cornuet2012adaptive,marin_consistency_2014,bugallo_adaptive_2015,bugallo2017adaptive,el-laham_efficient_2019,elvira_generalized_2019,van_osta_uncertainty_2021}.
We believe that these methods could be enhanced by incorporating massively parallel importance weighted (MPIW) moment estimates \citep{mom}, but we leave this complex integration to future work.
%Importantly, note that these methods update the proposal by using the current estimate of the true posterior.
%Thus, adaptive importance sampling/weighting methods will work better if, for a given proposal, they have a better estimate of the true posterior.
%However, \citet{chatterjee2018sample} indicates that to form a good estimate of the true posterior, we need exponentially many samples in the number of latent variables.
%This is a considerable barrier to traditional adaptive importance sampling/weighting methods.
%In contrast, QEM is based on MPIW, which in effect gets exponentially many samples by drawing $K$ samples for each of the $n$ latent variables, and reasoning about all $K^n$ combinations.

QEM is invariant to reparameterization of the approximate posterior, which resembles properties you might see in natural gradient variational inference \citep{amari1998natural,hoffman2013stochastic,hensman2013gaussian,khan2018fast}.
However, natural gradient inference is restricted to performing variational inference in the single-sample setting, so can be expected to give loose bounds on the ELBO and poor moment estimates \citep{turner2011two,burda2016importance,aitchison2019tensor,geffner2022variational}.

%The QEM approach 
% (Appendix~\ref{app:mstep})
%can be understood as an importance-weighted approximation to the KL-divergence of $\Pc{z}{x}$ from $\Q{z}$ (Appendix~\ref{app:kl}).
%This is the other way round from the KL-divergence in VI \citep{turner2011two}, and bears a stronger resemblance to expectation propagation (EP) \citep{minka2001expectation,seeger2005expectation,vehtari2020expectation,barthelme2014expectation,li2015stochastic}.
%However, our approach is also very different to EP.
%In particular, our method is guaranteed to converge to the correct moments as the number of importance samples goes to infinity (Appendix~\ref{app:inf}).
%In contrast, EP is not even guaranteed to converge (and indeed often does not).
%That is because EP uses an elaborate update procedure involving replacing a factor in the approximate posterior factor-graph with a factor from the true model.
%In contrast, \acro{} just uses moment estimates obtained by importance sampling.

\section{Background}
\label{sec:back}

\subsection{Bayesian inference.}%\newline
We have a probabilistic model with latents $z'$ and data $x$, with a prior, $\P{z'}$ and a likelihood, $\Pc{x}{z'}$. 
% We use $z$ rather than $z'$ because we will use $z$ later for a collection of samples of the latents.
Our goal is to compute properties of the posterior distribution over latent variables, $z'$, conditioned on data, $x$.  
This posterior is given by Bayes theorem,
\begin{align}
  \Pc{z'}{x} &= \frac{\Pc{x}{z'} \P{z'}}{\P{x}},
\end{align}
where $\P{x}$ is known as the marginal likelihood or model evidence,
\begin{align}
  \P{x} &= \int dz' \Pc{x}{z'} \P{z'}.
\end{align}
In QEM, we need to compute posterior moments,
\begin{align}
  \momtrue &= \E[\Pc{z'}{x}]{m(z')}.
\end{align}
where $m(z')$ is any moment of interest. These moments are usually intractable and must be approximated.

\subsection{Massively parallel importance weighting.} % \newline
Traditional importance sampling methods fail to obtain a good estimate of the posterior for hierarchical models as the required number of samples scales exponentially with the number of latent variables. MPIW resolves this by drawing $K$ samples for each of the $n$ latent variables and explicitly reweighting all $K^n$ combinations. For each latent variable $i$, we draw $K$ samples:
\begin{align}
    z_i &= (z_i^1, \dots, z_i^K) \in \mathcal{Z}_i^K,
\end{align}
and the collection of all samples of all latents is given by
\begin{align}
    z &= (z_1, \dots, z_n) \in \mathcal{Z}^K.
\end{align}

There are then two types of approximate posterior distributions:
\begin{itemize}
    \item $\Q{z}$: The standard approximate posterior distribution over a single complete set of latent variables
    \item $\Qmp{z}$: The massively parallel approximate posterior that handles all $K^n$ combinations of samples, where $n$ is the number of latent variables
\end{itemize}
While $\Q{z}$ works with just one set of latents, $\Qmp{z}$ is structured to handle the full combinatorial space created by having $K$ samples for each latent variable. Specifically, $\Qmp{z}$ factorizes according to the graphical model structure:
\begin{align}
    \Qmp{z} &= \prod_{i=1}^n \Qmpc{z_i}{z_j \text{ for } j \in \qa{i}}
\end{align}
where $\qa{i}$ represents the parents of the $i$th latent variable in the graphical model for the approximate posterior. This factorization allows $\Qmp{z}$ to efficiently handle the massive number of combinations while respecting the conditional independence structure of the model. More details on this can be found in Appendix ~\ref{app:background}.

Given indices $\k = (k_1, \dots, k_n) \in \mathcal{K}^n$, we compute the MPIW moment estimate:
\begin{subequations}
\label{eq:mp_iw}
\begin{align}
  \label{eq:mmp}
  m_\mp(z) &= \tfrac{1}{K^n} \sum_{\k \in \mathcal{K}^n} \frac{r_\k(z)}{\Pmp(z)} m(z^\k) \\
  \label{eq:rmp}
  r_{\k}(z) &= \frac{\P{x, z^\k}}{\prod_i \Qmpc{z_i^{k_i}}{z_j \text{ for } j \in \qa{i}}} \\
  \label{eq:Prmp}
  \Pmp(z) &= {\tfrac{1}{K^n} \sum_{\k\in\mathcal{K}^n} r_{\k}(z)}.
\end{align}
\end{subequations}

\citet{ml} showed that Eq.~\eqref{eq:Prmp} is an unbiased estimator of the marginal likelihood. Computing these estimates efficiently requires exploiting conditional independencies in the graphical model \citep{mom}.

While this resembles standard importance weighting, MPIW sums over all $K^n$ combinations of samples for all latent variables. \citet{mom} showed these moment estimates can be computed efficiently using the source-term trick---differentiating through a modified version of the ELBO---by exploiting conditional independencies in the graphical model. \citet{ml} showed that $\Pmp(z)$ provides an unbiased estimate of the marginal likelihood.

\begin{algorithm}[t]
  \caption{Expectation maximization for approximate posteriors (QEM) 
  \label{alg:QEM}
  }
  \begin{algorithmic}[0]
    \STATE $m_0 =$ Initial values.
    \COMMENT{Initialise approximate posterior mean parameters.}
    \FOR{$t \in 1,2,\dotsc,T$}
    %\EndWhile
    \STATE $Q_t = \text{set\_exp\_family\_params}(m_{t-1})$ \COMMENT{Set exponential family parameters (M-step)}
    \STATE $z \sim Q_t$
    \STATE $\momone_t = \text{MPIW}(z, Q_t)$ \COMMENT{Compute moments using MPIW (E-step)}
    \STATE $m_{t} = \lambda \momone_{t} + \b{1-\lambda} m_{t-1}$ \COMMENT{Update mean parameters via EMA}
    \ENDFOR
\end{algorithmic}
\end{algorithm}

\section{Methods}
\label{sec:methods}

Here, we introduce expectation maximization for approximate posteriors (QEM).
The overall approach is to alternate between E and M steps:
\begin{itemize}
  \item E-step: use MPIW to estimate true posterior moments. 
  \item M-step: compute the natural/conventional parameters of the approximate posterior by setting the moments of the approximate posterior equal to the (average of) moments computed in the E-step.
\end{itemize}

The algorithm operates by setting parameters of $\Q{z}$ based on current moment estimates, then using $\Qmp{z}$ to compute more accurate estimates by exploiting the full combinatorial space of samples. This separation allows us to maintain a simple parameterization while leveraging the power of massively parallel importance weighting.

For an efficient M-step, we assume that the approximate posterior is an exponential family distributions such as Gaussian, Beta, and Dirichlet in which the distributions natural or conventional parameters can be recovered directly from the moments.  For instance, a Gaussian component's mean and variance are determined by its first and second moments, while a Beta's parameters are recovered from expectations of log transformations.
See Appendix~\ref{app:mstep} for an in-depth discussion of the relationship between the QEM M-step and RWS.

\begin{figure*}[t]
\begin{center}
\includegraphics[width=\textwidth]{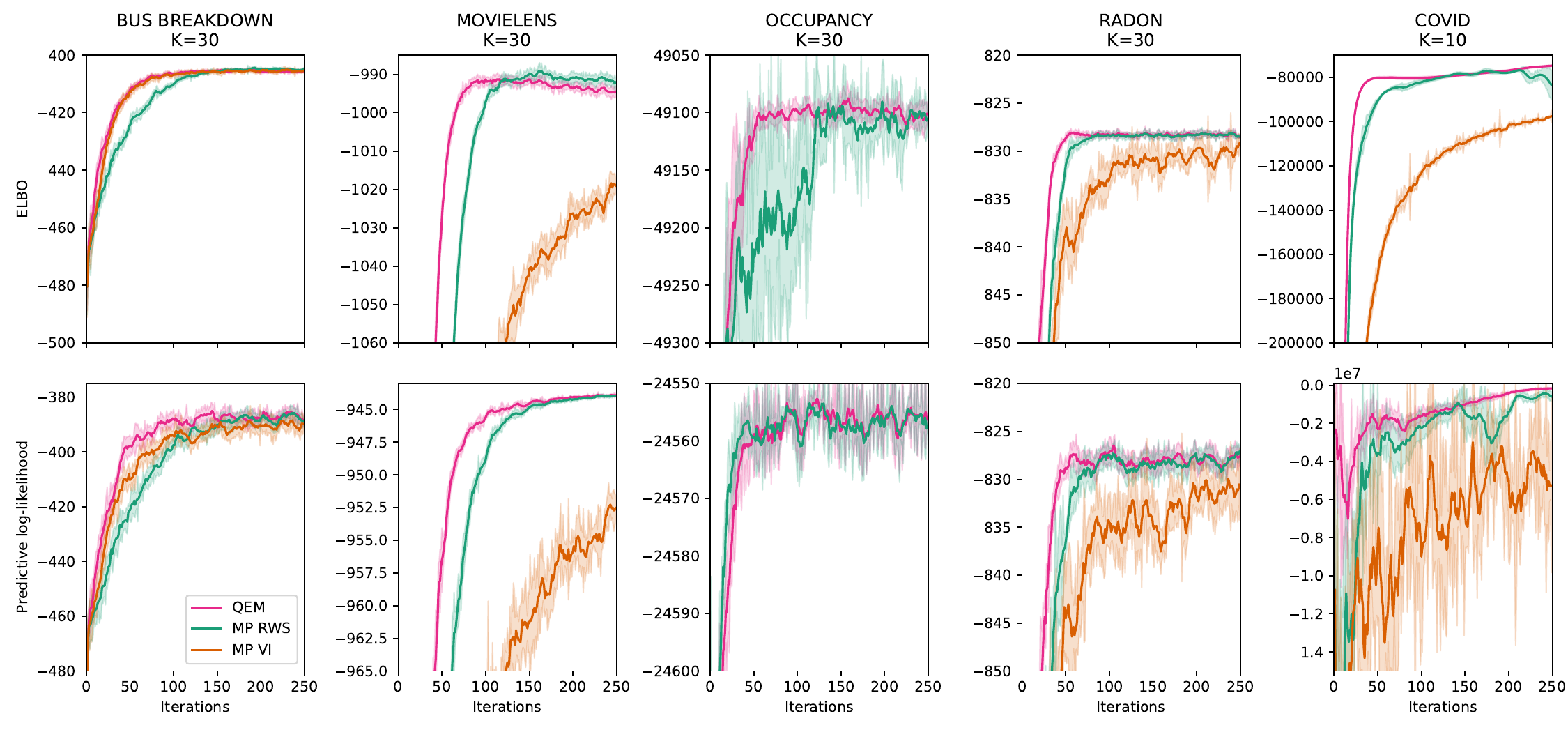}
\end{center}
\caption{Comparing the ELBO (top row) and predictive-log-likelihood (bottom row) of QEM (pink), RWS (green) and VI (orange) on several models, with iteration number on the x-axis. We report error bars on each line of one standard error over five repeated runs with the same data but using different random seeds. Note we did not run VI on the occupancy model as it has discrete latent variables.}
\label{fig:model_summary}
\end{figure*}

\begin{figure*}[t]
\begin{center}
\includegraphics[width=\textwidth]{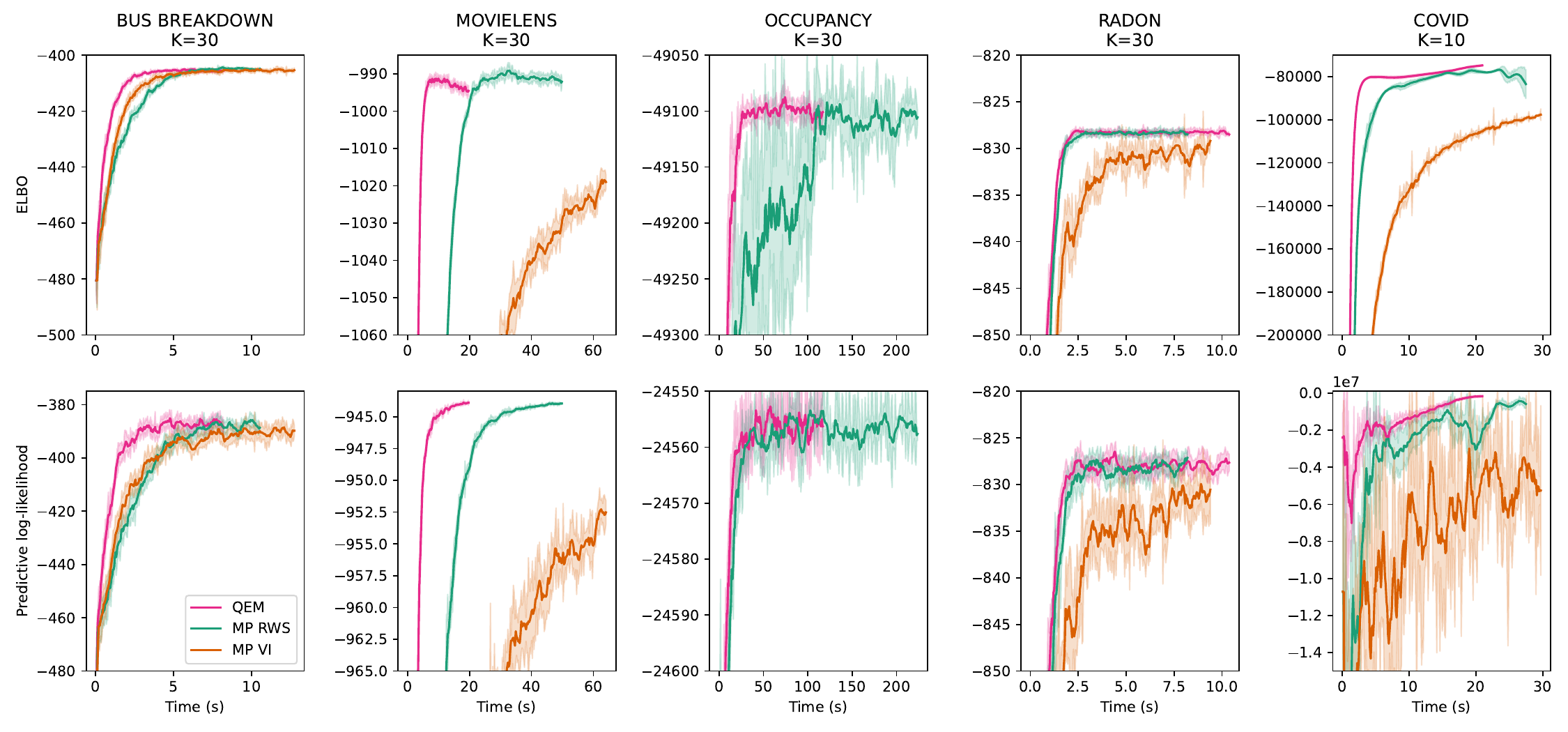}
\end{center}
\caption{As in Figure~\ref{fig:model_summary}, but with time, rather than iterations on the x-axis. Again, note we cannot run VI on the occupancy model as it has discrete latent variables.}
\label{fig:model_summary_time}
\end{figure*}

%
%
%
%
%
%Interestingly, this is very different from VI, as it relates to the KL from $\bareQ$ to $\bareP$,
%\begin{align}
%  \nonumber
%  \Dkl{\Pc{z}{x}}{\Q{z}} &= \E[\Pc{z}{x}]{\log \frac{\Pc{z}{x}}{\Q{z_i}}} \\ 
%  \label{eq:KL}
%  &= \const - \L_\post
%\end{align}
%while the ELBO can be written in terms of the KL from $\bareP$ to $\bareQ$,
%\begin{align}
%  \text{ELBO} &= \log \P{x} - \Dkl{\Q{z}}{\Pc{z}{x}}.
%\end{align}
%% Thus, QEM should be less prone to underestimating marginal variances than VI \citep{turner2011two,margossian2023shrinkage}.
%
%Now, the key question is how to optimize an approximation to Eq.~\eqref{eq:obj}.
%One approach would be gradient descent, which is precisely the approach taken by massively parallel RWS \citep{ml}.
%However, as previously mentioned, using gradients for optimization can be slow, and may require considerable tuning of hyperparameters.
%Instead, we notice that for simple approximate posteriors such as Gaussian, Gamma, Beta, Dirichlet, Binomial, Multinomial, Categorical, etc. you can optimize Eq.~\eqref{eq:obj} without gradient descent by instead estimating moments under the true posterior.
%For instance, you can fit the optimal Gaussian distribution using the first and second moment of the true posterior.
%This leads to the QEM algorithm:
%\begin{itemize}
%  \item E-step: use MPIW to estimate moments. 
%  \item M-step: use the estimated moments to improve the approximate posteriors and maximize the objective (Eq.~\ref{eq:obj}).
%\end{itemize}
In practice, just using the previous moment estimate can cause pathologies (e.g.\ if the second moment estimates on one step happen to be very close to the first moment estimates due to only one of the samples having non-negligible weight).
We can reduce these pathologies by using an exponential moving average (EMA) for the moments.
Specifically, if $m_{t-1}$ is the previous EMA moment estimate, and $\momone_t$ is the new MPIW moment estimate, then we would form a new EMA moment estimate using
\begin{align}
  \label{eq:ema}
  m_{t} &= (1-\lambda) m_{t-1} + \lambda \momone_{t}.
\end{align}

Algorithm~\ref{alg:QEM} presents the complete QEM procedure. In each iteration, the algorithm uses the current mean parameters $m_{t-1}$ to set the parameters of exponential family distributions that form the approximate posterior $Q_t$ (line~3). This step exploits the fact that exponential family distributions are completely determined by their mean parameters---for instance, a Gaussian's mean and variance can be derived from its first and second moments. The algorithm then draws samples from $Q_t$ (line~4) and uses MPIW to compute accurate estimates of the true posterior moments (line~5). Finally, these moment estimates are combined with previous estimates using an exponential moving average (line~6) to reduce the impact of any single noisy estimate.

Importantly, QEM differs from standard EM \citep{dempster1977maximum} because in standard EM, there is no approximate posterior. Instead, standard EM learns parameters of the model $\theta$ which may appear in the prior $\mathrm{P}_\theta(z')$ or in the likelihood, $\mathrm{P}_\theta(x|z')$. The QEM approach provides several advantages: it avoids gradient-based optimization, is invariant to reparameterization (Section~\ref{sec:reparam}), and converges to the correct moments as the number of importance samples increases (Appendix~\ref{app:inf}).
Of course, approaches such as MC-EM \citep{wei1990monte,levine2001implementations} use a MCMC approximation to the posterior.
However, MCMC can be very slow, and may parallelise poorly.
In contrast, we use fast and highly parallelisable massively parallel methods.

While this approach might seem natural, there are actually a number of choices here.
For instance, at each step we could use MPIW to estimate the moments (in an exponential family, the mean parameters), then we could immediately compute the corresponding conventional parameters (for a Gaussian, the mean and variance) then we could do an EMA for these conventional parameters.
Likewise, we could also compute the EMA of the estimated exponential family natural parameters.
We choose the mean parameters, because you only get the same fixed points as massively parallel reweighted wake-sleep if we do the EMA for the mean parameters (Appendix~\ref{app:mstep}).

Indeed, we can prove that if you have an unbiased estimator of the parameter of interest, many iterations and a sufficiently large $\lambda$, then the EMA gives the right answer,
\begin{theorem}
\label{thm}
Consider an exponential moving average moment estimator of the form Eq.~\eqref{eq:ema}, where $\momone_t$ is an unbiased estimator with finite variance and where
\begin{align}
  \lambda(t) &= 1-t^{-p},
\end{align}
with $0 < p < 1$.
In the limit as $t \rightarrow \infty$, $m_t$ is unbiased, and zero variance,
\begin{align}
  \lim_{t \rightarrow 0} \Var{m_t} = 0.
\end{align}
\end{theorem}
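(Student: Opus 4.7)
The plan is to analyze bias and variance separately by unrolling the EMA recursion. Writing $\alpha_t = \lambda(t)$ for the weight on the $t$-th fresh sample, a straightforward induction gives the explicit representation
$$m_t = w_{0,t}\, m_0 + \sum_{s=1}^t w_{s,t}\, \momone_s,\qquad w_{s,t} = \alpha_s \prod_{r=s+1}^t (1-\alpha_r),\qquad w_{0,t} = \prod_{r=1}^t (1-\alpha_r),$$
along with the identity $w_{0,t} + \sum_{s=1}^t w_{s,t} = 1$, also by induction. Let $\mu = \E{\momone_t}$ and $\sigma^2 = \Var{\momone_t}$ denote the common mean and variance of the unbiased estimators.

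First I would handle the bias. Linearity of expectation and unbiasedness give $\E{m_t} - \mu = w_{0,t}(\E{m_0} - \mu)$, so asymptotic unbiasedness reduces to $w_{0,t} \to 0$. Using $\log(1-x) \leq -x$ gives $\log w_{0,t} \leq -\sum_{r=1}^t \alpha_r$, and an integral comparison shows this partial sum diverges like $t^{1-p}/(1-p)$, so $w_{0,t} \to 0$.

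For the variance, assuming the $\momone_s$ are uncorrelated across $s$ with common variance $\sigma^2$ (which follows from applying the unbiasedness hypothesis conditionally on the past), the one-step recursion reads
$$v_t = (1-\alpha_t)^2\, v_{t-1} + \alpha_t^2\, \sigma^2,\qquad v_t = \Var{m_t}.$$
I would establish $v_t = O(\alpha_t)$ by induction: if $v_{t-1} \leq C\alpha_{t-1}$ for a suitably large constant $C$, the recursion combined with $\alpha_{t-1}/\alpha_t = 1 + O(1/t)$ (which is dominated by $2\alpha_t = 2 t^{-p}$ for $p < 1$ and large $t$) yields $v_t \leq C\alpha_t$. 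Since $\alpha_t \to 0$, we conclude $v_t \to 0$.

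The main obstacle is the variance step: the naive bound $v_t \leq \sigma^2 \sum_s \alpha_s^2$ diverges for $p \leq 1/2$, so one must genuinely exploit the contraction factor $(1-\alpha_t)^2$ rather than only the additive noise. The intuition is that the EMA effectively averages over a window of $\Theta(t^p)$ recent samples, giving variance of order $\sigma^2 \alpha_t$. An equivalent route would be to bound $\sum_s w_{s,t}^2$ directly by splitting the sum at $s \approx t - t^p$ and using the tail estimate $w_{s,t} \leq \alpha_s \exp\b{-\sum_{r=s+1}^t \alpha_r}$ to show the contribution from small $s$ is exponentially small, while the remaining $\Theta(t^p)$ terms each contribute $O(t^{-2p})$ for a total $O(t^{-p})$.
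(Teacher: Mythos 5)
Your proposal is correct in substance and reaches the stated conclusion, but the route through the variance bound is genuinely different from the paper's. The paper unrolls the EMA as if $\lambda$ were constant, obtains the closed-form geometric sums $\E{m_t}=(1-\lambda^t)\momtrue$ and $\Var{m_t}=(1-\lambda)^2\tfrac{1-\lambda^{2t}}{1-\lambda^2}v$, and only afterwards substitutes $\lambda=\lambda(t)$ and takes limits via L'Hopital; strictly speaking, with a time-varying $\lambda$ the weight on $\momone_\tau$ is a product of distinct factors $\prod_{r=\tau+1}^{t}\lambda(r)$ rather than a power $\lambda(t)^{t-\tau}$, so the paper's computation is a frozen-coefficient heuristic. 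Your argument works with the exact time-varying weights $w_{s,t}$, kills the bias via $w_{0,t}\to 0$ (and correctly accounts for the initial condition $m_0$, which the paper drops), and proves $\Var{m_t}=O(t^{-p})$ by induction on the one-step recursion $v_t=(1-\alpha_t)^2v_{t-1}+\alpha_t^2\sigma^2$ --- more careful, and it additionally yields a rate; your observation that the naive bound $\sum_s\alpha_s^2$ diverges for $p\le\tfrac12$ so the contraction factor must genuinely be used is exactly the right point. Two caveats. First, your opening line sets $\alpha_t=\lambda(t)=1-t^{-p}$ as the fresh-sample weight, yet every subsequent computation (the $t^{1-p}/(1-p)$ divergence, ``$2\alpha_t=2t^{-p}$'', ``$\alpha_t\to 0$'') uses $\alpha_t=t^{-p}=1-\lambda(t)$; only the latter makes the theorem true (under the literal reading of Eq.~\eqref{eq:ema} the fresh-sample weight tends to $1$ and the variance tends to $\sigma^2$), and it is the convention the paper's own appendix proof silently adopts even though Eq.~\eqref{eq:ema} and Algorithm~\ref{alg:QEM} put $\lambda$ on the fresh sample --- so fix your definition, not your calculations. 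Second, both you and the paper need the $\momone_s$ to be uncorrelated across $s$; the paper simply asserts the cross terms vanish, whereas you at least flag that conditional unbiasedness given the past suffices, which is the honest hypothesis since in QEM the proposal at step $t$ depends on earlier estimates.
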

See \ref{app:thm} for a proof. 
Note that MPIW, as presented in \autoref{eq:mmp}-\ref{eq:Prmp}, gives such an unbiased estimator \(\momone_t\) in the limit as \(K \to \infty\), since it is a self-normalizing importance weighting strategy (meaning the same sample \(z\) is used to calculate \(\Pmp(z)\) as well as \(r_\k(z)\) and \(m(z^\k)\)).
We can achieve an unbiased \(\momone_t\) with finite \(K\) if we simply use a second, distinct sample \(z_\text{new} \sim Q_\text{MP}\) to calculate \(\Pmp(z_\text{new})\) instead (with \(r_\k(z)\) and \(m(z^\k)\) unchanged).

\section{Results}
\label{sec:results}

We consider the following five datasets: Bus Breakdown \citep{ml}, MovieLens100K \citep{harper2015movielens}, Occupancy \citep{mom}, Radon \citep{gelman2006data} and Covid \citep{leech2022mask}. 
Bus Breakdown consists of around 150,000 New York city bus delays stratified by year and borough, we use a subset consisting of 2 years, 3 boroughs in each year and 300 delays in each borough.
Movielens100K consists of 100,000 movie ratings from 943 users of 1682 movies. 
The Radon dataset consists of radon readings taken at houses in the United States, of which we take a subset of 4 states and 300 readings in each state.
The Bird Occupancy dataset comes from the North American Breeding Bird Survey, which reports the number of sightings of over 700 species of birds from 1966 to 2021 along thousands of roadside routes in the USA and Canada. 
The Covid dataset is adapted from \citep{leech2022mask} and consists of 137 daily Covid-19 infection numbers from 92 countries.

For each dataset we define a hierarchical generative model $\P{x, z^\k}$ and an approximate posterior/proposal distribution $\Qmp{z^\k}$ over the same latents $z$.
See Appendix~\ref{app:exps} for full details of the priors and approximate posteriors. 
For the approximate posteriors, we usually use a Gaussian independent of the other latent variables.

We use strong baselines: massively parallel variational inference \citep[MP VI;][]{aitchison2019tensor} and massively parallel reweighted wake-sleep \citep[MP RWS;][]{ml}.
For MP VI and MP RWS, we need an optimizer.
We use the Adam optimizer \citep{kingma2014adam} with a learning rate given by selecting the value from {0.3, 0.1, 0.03, 0.01, 0.003, 0.001} which led to the greatest ELBO after 125 iterations (which is also how we chose $\lambda$ for the EMA).
All other Adam hyperparameters are left at their PyTorch defaults. 
For VI the objective is simply the massively parallel ELBO, (Eq.~\ref{eq:Prmp}), whilst for RWS this is a slightly modified massively parallel ELBO (as discussed in \citealp{ml}) which performs a maximum-likelihood update on the parameters of $\bareQ_{\text{MP}}$ using posterior samples obtained via proposal samples reweighted by the importance weights $r_\k(z)$ from Eq.~\eqref{eq:rmp}.

To measure the quality of inference achieved we consider three metrics.
% To measure the quality of inference achieved we use three metrics: the importance weighted evidence lower bound (ELBO); predictive log-likelihood; and the mean-squared error between posterior moment estimates and the true posterior moments (which we find using HMC).
First, we use the importance weighted ELBO. 
This can be seen as a single-sample ELBO with an improved approximate posterior \citep{cremer2017reinterpreting, bachman_training_2015}, and since the single-sample ELBO directly measures the KL divergence between the true posterior and approximate posterior,
\begin{align}
  \text{ELBO} = \log \P{x} - D_{KL}(\Q{z'}|| \Pc{z}{x}),
\end{align}
the importance weighted ELBO is commonly taken as a proxy for the quality of the importance weighted posterior estimate \citep{geffner2022variational, agrawal2021amortized, domke_divide_2019}.
When interpreting the ELBO, it is important to remember that VI directly maximizes the ELBO, and while QEM and MP RWS do not directly maximize the ELBO; nonetheless, it remains a useful diagnostic for all methods.
Secondly, we use predictive log-likelihood: we draw latent samples conditioned on ``training'' data from each dataset and use these to predict a separate ``test'' set. Details about the train and test splits for each dataset can be found in their respective sections in the appendix.
Thus, in expectation a higher predictive log-likelihood indicates importance samples which are closer to the true posterior.
Thirdly, we simply measure the mean-squared error between posterior first moment estimates and ``true'' posterior first moments (which we find using long HMC runs).

We use $K=30$ in all experiments except for the Covid dataset in which we use $K=10$ due to memory constraints which apply equally to QEM and the massively parallel baselines (MP VI and MP RWS).
In each experiment we run 250 iterations of the specified inference algorithm to optimize the approximate posterior.
%We tune learning rates by selecting the value from $\{0.3,0.1,0.03,0.01,0.003,0.001\}$ which led to the greatest ELBO after 125 iterations.
%Full experimental details can be found in appendix \ref{app:exps}.

% \begin{figure*}[t]
% %\begin{center}
% \includegraphics[width=\textwidth]{figures/increasing_difficulty.pdf}
% %\end{center}
% \caption{Results on iid Gaussian with increasingly difficult posteriors from left to right. (a) $N=50$, $K=5$, wide posteriors with small location variance; (b) $N=5000$, $K=5$, wide posteriors with small location variance; (c) $N=5000$, $K=5$, wide posteriors with large location variance; (d) $N=5000$, $K=5$, narrow posteriors with small location variance; (e) $N=5000$, $K=5$, narrow posteriors with large location variance}
% \label{fig:gaussian}
% \end{figure*}

\begin{figure*}[t]
\begin{center}
\includegraphics[width=\textwidth]{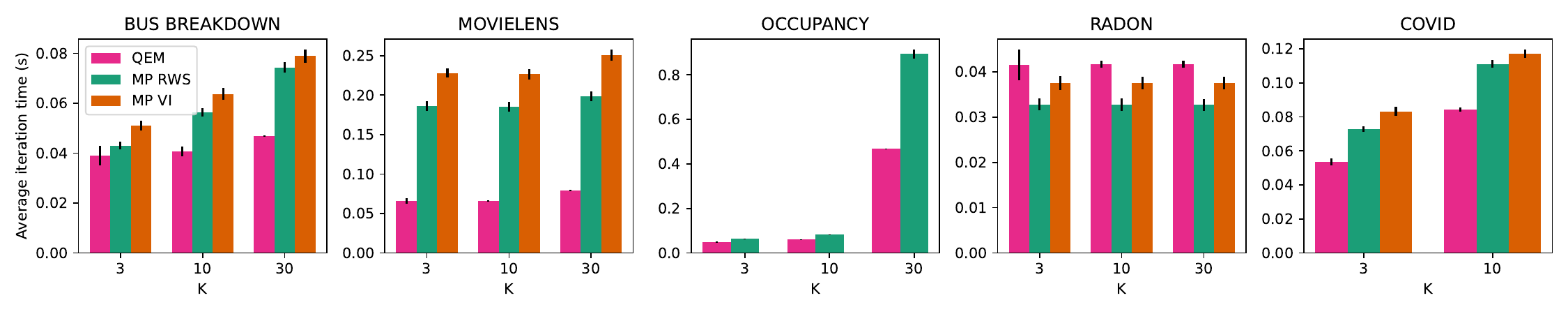}
\end{center}
\caption{Comparing the time-per-iteration between QEM (pink), RWS (green) and VI (orange) on several models with varying values of K. Black error bars represent one standard deviation over all iterations and experiment repeats.}
\label{fig:models_time_per_iter}
\end{figure*}

\subsection{Model comparison}
\label{sec:main_results}

We begin by comparing QEM against its closest competitors: MP RWS and MP VI. 
% The selected learning rates are the highest learning rates possible before instability in the model training. 
Fig.~\ref{fig:model_summary} shows performance as measured by the ELBO (top) and predictive log-likelihood (bottom) versus iterations on five datasets.
Fig.~\ref{fig:model_summary_time} mirrors Fig.~\ref{fig:model_summary}, but has time on the x-axis. 
Note that occupancy has a discrete latent variable and so therefore doesn't have a differentiable likelihood, precluding the possibility of using MP VI or HMC.
%
% We chose K for each model by finding the highest multiple of 5 that our memory budget would allow. Learning rates were chosen similarly, the largest learning rate in the set $\{0.01, 0.1, 0.3\}$ where training doesn't diverge.
%
Figure~\ref{fig:model_summary} shows that QEM outperforms MP VI in terms of both ELBO and predictive log-likelihood in every model, and learns faster than MP RWS in all settings except Radon and Occupancy when considering the predictive log-likelihood.
Also notice that the standard errors (shown in the error bars of Figures~\ref{fig:model_summary} and \ref{fig:model_summary_time}) of QEM's ELBO and predictive log-likelihood results are typically much lower than that of the other methods.
The plots against time are even more stark: notice that since we do not need to calculate gradients for the whole joint distribution, QEM runs significantly faster than MP VI and MP RWS.

This can be seen more clearly in Figure~\ref{fig:models_time_per_iter}: the mean time taken to complete a single iteration of QEM is almost always less than MP VI and MP RWS for all values of $K$, with the only exception occurring in the Radon model, where all methods are extremely fast but QEM is very slightly slower than the others. %in particular MP RWS is fastest. 

We also show in Figure~\ref{fig:moments} that the quality of posterior moments/approximate posterior obtained by QEM is superior to that of MP VI and at least as good as MP RWS, as measured by the mean squared distance to the true posterior means, obtained by HMC via the BlackJAX NUTS sampler \citep{cabezas2024blackjax}.
% Furthermore, Figure~\ref{fig:QEM_vs_HMC} shows that QEM is able to achieve competitive predictive log-likelihoods compared to HMC in less time.

\begin{figure*}[t]
\begin{center}
\includegraphics[width=0.75\textwidth]{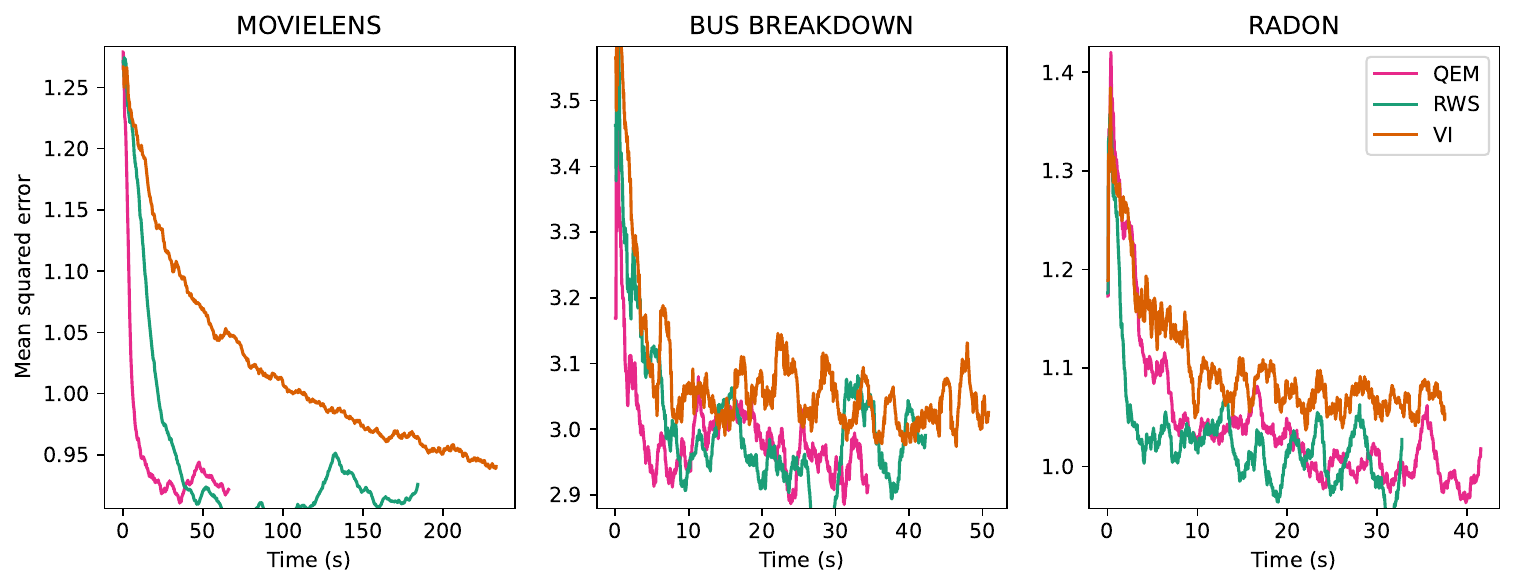}
\end{center}
\caption{Mean squared error between first moment estimates for each method and HMC first moment estimates plotted against time. Occupancy is not plotted because it has discrete latent variables, preventing the use of HMC, and Covid is not plotted because we were not able to scale HMC to this larger model.}
\label{fig:moments}
\end{figure*}

% \begin{figure*}[t]
% \begin{center}
% \includegraphics[width=0.65\textwidth]{}
% \end{center}
% \caption{Comparing predictive log-likelihood of  QEM (pink), RWS (green) and VI (orange) against HMC. QEM outperforms HMC on speed in all cases. The numbers in the legend refer to the number of warm-up iterations.}
% \label{fig:QEM_vs_HMC}
% \end{figure*}

%Finally, it is worth noting that in performing the experiments for this paper we noticed that QEM tended to be more numerically stable than MP VI, MP RWS and HMC and less sensitive to reparameterizations of the models.

\subsection{QEM is invariant under reparameterization}
\label{sec:reparam}

One final advantage of QEM is that it is invariant to reparameterizations which leave the distribution over the data intact, e.g.\ $z' = \alpha \zt$ for some constant factor $\alpha$ on a latent variable $\zt$. % by suitably modifying the prior and likelihood (see Appendix~\ref{app:reparam} for details).
To test this, we rerun each of our experiments but modify the models by scaling down the size of a single latent variable by a factor of \(\alpha \in \{\nicefrac{1}{100}, \nicefrac{1}{1000}, \nicefrac{1}{100,00}\}\), before scaling it back up by $1/\alpha$ for use later in the model---thus leaving the rest of the overall distribution unchanged.
(Note that these reparameterizations are not related to the reparameterization trick.)

Figure~\ref{fig:reparam} shows that whilst QEM retains identical performance under these changes (barring occasional small numerical differences from floating point operations at the different scales), the gradient-based updates for MP VI and MP RWS are not invariant to such reparameterizations, and therefore experience slower, noisier optimization and numerical instability (Fig.~\ref{fig:reparam} shows VI failing before reaching 250 iterations on the reparameterized Radon and Covid models).
See Appendix~\ref{app:reparam} for a proof and more details on the experimental setting.
%We show in Appendix~\ref{app:reparam} that QEM is robust to reparameterizations where one or more latent variables are scaled by some constant factor which we demonstrate with empirical results and a proof sketch.

\begin{figure*}[t]
\begin{center}
\includegraphics[width=\textwidth]{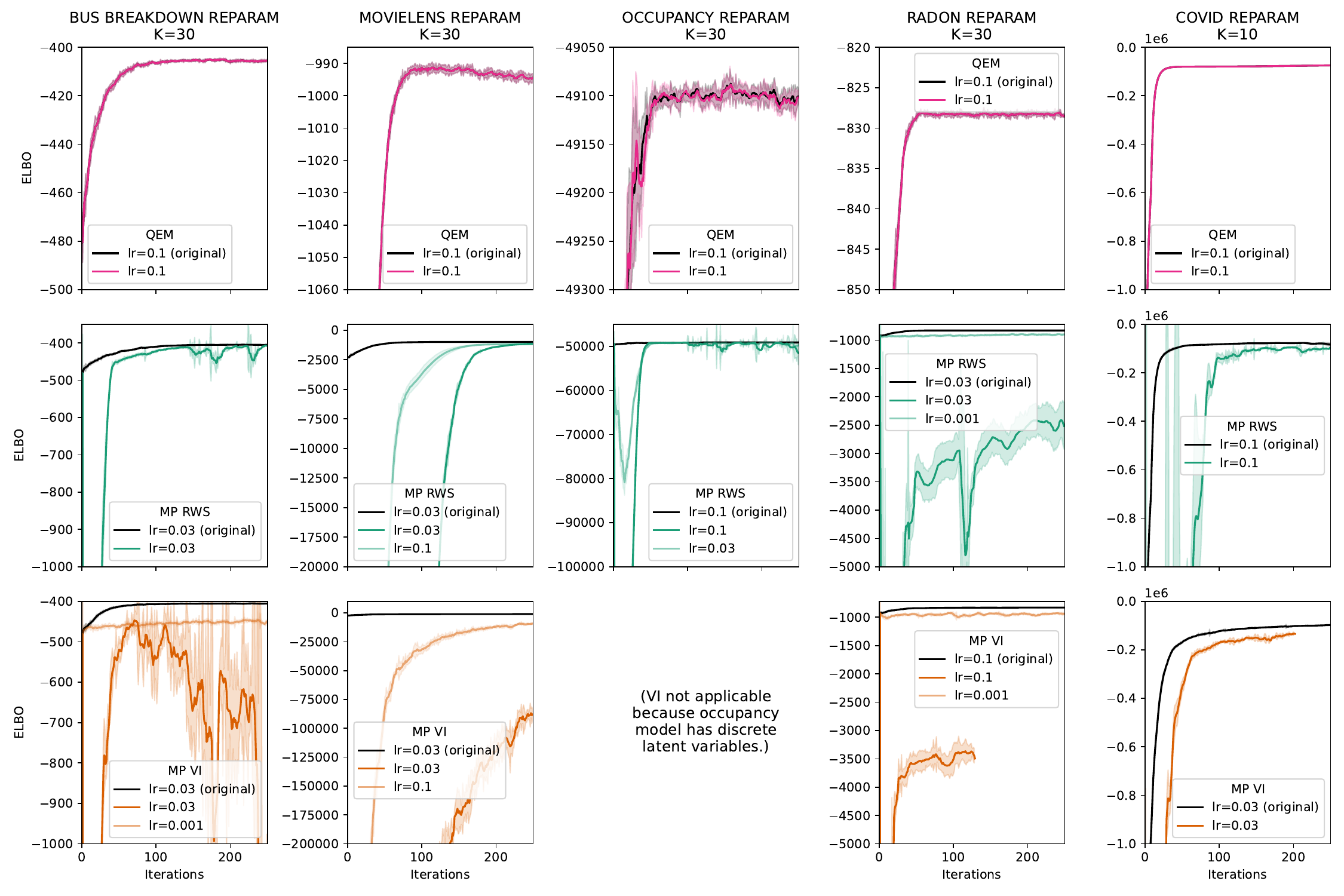}
\end{center}
\caption{Comparing the ELBOs achieved by each method on reparameterized models (coloured lines) versus the original parameterization (black lines), with error bars representing standard errors over five runs with the same data but using different random seeds. In many cases, the reparameterization led to a different learning rate being optimal for MP VI and MP RWS. Where this occurred, we have plotted the learning rate that was optimal for the original parameterization in a solid colour and the learning rate that was optimal for the reparameterization in a fainter colour.}
\label{fig:reparam}
\end{figure*}

%Moments
\section{Limitations}
\label{sec:limitations}
QEM faces the same main limitations as other massively parallel methods: complexity of implementation and memory constraints.
As mentioned in Section \ref{sec:back}, computing the massively parallel posterior moment estimates is, in general, far from trivial. 
However, using the strategy developed in \cite{mom}, this is made possible (and efficient) for a very large class of probabilistic models.
Similarly, the task of reasoning over an effectively exponential number of samples presents immediate concerns for memory consumption.
In order to deal with this we follow the strategy adopted in \cite{aitchison2019tensor} to reduce the memory complexity from $O(K^n)$ to $O(K^{\max_i{|\qa{i}|}})$ where $|\qa{i}|$ is the number of parent random variables for the $i$th child random variable in a massively parallel proposal $\bareQ_{\text{MP}}$.
Further implementation details can also be used such as grouping random variables together in the proposal and splitting necessary computations into manageable chunks.
Such details as they pertain to the experiments performed in this paper can be found in Appendix~\ref{app:exps}.

\section{Conclusion}
We present a new approach for performing Bayesian inference on general probabilistic graphical models, based on performing expectation maximization on an approximate posterior using massively parallel posterior moment estimates, described in Section \ref{sec:methods}.
We have shown in Section \ref{sec:results} that this new method, QEM, not only outperforms other standard techniques such as massively parallel VI and RWS in terms of higher ELBOs, predictive log-likelihoods and more accurate posterior moments, but also takes less time to do so thanks to its gradient-free approach. %whilst retaining the speed of these methods in contrast to more expensive methods such as HMC.
Furthermore, we present theoretical justifications and empirical results for QEM's superior robustness to model reparameterization in comparison to gradient-based methods such as VI and RWS in Section~\ref{sec:reparam} and Appendix~\ref{app:reparam}.

We see QEM as a significant development in displaying the power of massively parallel probabilistic inference.
As future work, we anticipate more inference methods making use of the massively parallel framework, with the aim to collect these into a complete probabilistic programming language.

\bibliography{refs}

\begin{thebibliography}{54}
\providecommand{\natexlab}[1]{#1}
\providecommand{\url}[1]{\texttt{#1}}
\expandafter\ifx\csname urlstyle\endcsname\relax
  \providecommand{\doi}[1]{doi: #1}\else
  \providecommand{\doi}{doi: \begingroup \urlstyle{rm}\Url}\fi

\bibitem[Agrawal and Domke(2021)]{agrawal2021amortized}
Abhinav Agrawal and Justin Domke.
\newblock Amortized variational inference for simple hierarchical models.
\newblock \emph{Advances in Neural Information Processing Systems}, 34:\penalty0 21388--21399, 2021.

\bibitem[Aitchison(2019)]{aitchison2019tensor}
Laurence Aitchison.
\newblock {Tensor Monte Carlo: particle methods for the GPU era}.
\newblock \emph{Advances in Neural Information Processing Systems}, 32, 2019.

\bibitem[Amari(1998)]{amari1998natural}
Shun-Ichi Amari.
\newblock Natural gradient works efficiently in learning.
\newblock \emph{Neural computation}, 10\penalty0 (2):\penalty0 251--276, 1998.

\bibitem[Bachman and Precup(2015)]{bachman_training_2015}
Philip Bachman and Doina Precup.
\newblock Training {Deep} {Generative} {Models}: {Variations} on a {Theme}.
\newblock \emph{NIPS 2015 Workshop: Advances in Approximate Bayesian Inference}, 2015.

\bibitem[Blei et~al.(2017)Blei, Kucukelbir, and McAuliffe]{blei2017variational}
David~M Blei, Alp Kucukelbir, and Jon~D McAuliffe.
\newblock Variational inference: A review for statisticians.
\newblock \emph{Journal of the American statistical Association}, 112\penalty0 (518):\penalty0 859--877, 2017.

\bibitem[Bornschein and Bengio(2015)]{bornschein2015reweighted}
J{\"o}rg Bornschein and Yoshua Bengio.
\newblock Reweighted wake-sleep.
\newblock In \emph{International Conference on Learning Representations (ICLR)}, 2015.

\bibitem[Bowyer et~al.(2024)Bowyer, Heap, and Aitchison]{mom}
Sam Bowyer, Thomas Heap, and Laurence Aitchison.
\newblock Using autodiff to estimate posterior moments, marginals and samples.
\newblock \emph{Uncertainty in Artificial Intelligence}, 2024.

\bibitem[Bugallo et~al.(2017)Bugallo, Elvira, Martino, Luengo, Miguez, and Djuric]{bugallo2017adaptive}
Monica~F Bugallo, Victor Elvira, Luca Martino, David Luengo, Joaquin Miguez, and Petar~M Djuric.
\newblock Adaptive importance sampling: The past, the present, and the future.
\newblock \emph{IEEE Signal Processing Magazine}, 34\penalty0 (4):\penalty0 60--79, 2017.

\bibitem[Bugallo et~al.(2015)Bugallo, Martino, and Corander]{bugallo_adaptive_2015}
Mónica~F. Bugallo, Luca Martino, and Jukka Corander.
\newblock Adaptive importance sampling in signal processing.
\newblock \emph{Digital Signal Processing}, 47:\penalty0 36--49, 2015.
\newblock ISSN 1051-2004.
\newblock \doi{https://doi.org/10.1016/j.dsp.2015.05.014}.
\newblock URL \url{https://www.sciencedirect.com/science/article/pii/S1051200415001864}.

\bibitem[Burda et~al.(2016)Burda, Grosse, and Salakhutdinov]{burda2016importance}
Yuri Burda, Roger Grosse, and Ruslan Salakhutdinov.
\newblock Importance weighted autoencoders.
\newblock In \emph{International Conference on Learning Representations (ICLR)}, 2016.

\bibitem[Cabezas et~al.(2024)Cabezas, Corenflos, Lao, and Louf]{cabezas2024blackjax}
Alberto Cabezas, Adrien Corenflos, Junpeng Lao, and Rémi Louf.
\newblock Blackjax: Composable {B}ayesian inference in {JAX}, 2024.

\bibitem[Chatterjee and Diaconis(2018)]{chatterjee2018sample}
Sourav Chatterjee and Persi Diaconis.
\newblock The sample size required in importance sampling.
\newblock \emph{The Annals of Applied Probability}, 28\penalty0 (2):\penalty0 1099--1135, 2018.

\bibitem[Cornuet et~al.(2012)Cornuet, Marin, Mira, and Robert]{cornuet2012adaptive}
Jean-Marie Cornuet, Jean-Michel Marin, Antonietta Mira, and Christian~P Robert.
\newblock Adaptive multiple importance sampling.
\newblock \emph{Scandinavian Journal of Statistics}, 39\penalty0 (4):\penalty0 798--812, 2012.

\bibitem[Cremer et~al.(2017)Cremer, Morris, and Duvenaud]{cremer2017reinterpreting}
Chris Cremer, Quaid Morris, and David Duvenaud.
\newblock Reinterpreting importance-weighted autoencoders.
\newblock In \emph{International Conference on Learning Representations (Workshop)}, 2017.

\bibitem[Dempster et~al.(1977)Dempster, Laird, and Rubin]{dempster1977maximum}
Arthur~P Dempster, Nan~M Laird, and Donald~B Rubin.
\newblock Maximum likelihood from incomplete data via the em algorithm.
\newblock \emph{Journal of the royal statistical society: series B (methodological)}, 39\penalty0 (1):\penalty0 1--22, 1977.

\bibitem[DOE(2023)]{nyc2023bus}
DOE.
\newblock Bus breakdown and delays, 2023.
\newblock \ url { https://data.cityofnewyork.us/Transportation/Bus-Breakdown-and-Delays/ez4e-fazm } Accessed on: 05.05.2023.

\bibitem[Domke and Sheldon(2019)]{domke_divide_2019}
Justin Domke and Daniel Sheldon.
\newblock Divide and {Couple}: {Using} {Monte} {Carlo} {Variational} {Objectives} for {Posterior} {Approximation}.
\newblock \emph{Advances in Neural Information Processing Systems}, 2019.

\bibitem[Doser et~al.(2022)Doser, Finley, K{\'e}ry, and Zipkin]{doser2022spoccupancy}
Jeffrey~W Doser, Andrew~O Finley, Marc K{\'e}ry, and Elise~F Zipkin.
\newblock spoccupancy: An r package for single-species, multi-species, and integrated spatial occupancy models.
\newblock \emph{Methods in Ecology and Evolution}, 13\penalty0 (8):\penalty0 1670--1678, 2022.

\bibitem[Duane et~al.(1987)Duane, Kennedy, Pendleton, and Roweth]{duane1987hybrid}
Simon Duane, Anthony~D Kennedy, Brian~J Pendleton, and Duncan Roweth.
\newblock Hybrid monte carlo.
\newblock \emph{Physics letters B}, 195\penalty0 (2):\penalty0 216--222, 1987.

\bibitem[El-Laham et~al.(2019)El-Laham, Martino, Elvira, and Bugallo]{el-laham_efficient_2019}
Yousef El-Laham, Luca Martino, Víctor Elvira, and Mónica~F. Bugallo.
\newblock Efficient {Adaptive} {Multiple} {Importance} {Sampling}.
\newblock In \emph{2019 27th {European} {Signal} {Processing} {Conference} ({EUSIPCO})}, pages 1--5, September 2019.
\newblock \doi{10.23919/EUSIPCO.2019.8902642}.
\newblock ISSN: 2076-1465.

\bibitem[Elvira et~al.(2019)Elvira, Martino, Luengo, and Bugallo]{elvira_generalized_2019}
Víctor Elvira, Luca Martino, David Luengo, and Mónica~F. Bugallo.
\newblock Generalized {Multiple} {Importance} {Sampling}.
\newblock \emph{Statistical Science}, 34\penalty0 (1):\penalty0 129--155, February 2019.
\newblock ISSN 0883-4237, 2168-8745.
\newblock \doi{10.1214/18-STS668}.
\newblock URL \url{https://projecteuclid.org/journals/statistical-science/volume-34/issue-1/Generalized-Multiple-Importance-Sampling/10.1214/18-STS668.full}.
\newblock Publisher: Institute of Mathematical Statistics.

\bibitem[Gayoso et~al.(2021)Gayoso, Steier, Lopez, Regier, Nazor, Streets, and Yosef]{gayoso2021joint}
Adam Gayoso, Zo{\"e} Steier, Romain Lopez, Jeffrey Regier, Kristopher~L Nazor, Aaron Streets, and Nir Yosef.
\newblock {Joint probabilistic modeling of single-cell multi-omic data with totalVI}.
\newblock \emph{Nature methods}, 18\penalty0 (3):\penalty0 272--282, 2021.

\bibitem[Geffner and Domke(2022)]{geffner2022variational}
Tomas Geffner and Justin Domke.
\newblock Variational inference with locally enhanced bounds for hierarchical models.
\newblock In \emph{International Conference on Machine Learning}, pages 7310--7323. PMLR, 2022.

\bibitem[Gelman and Hill(2006)]{gelman2006data}
Andrew Gelman and Jennifer Hill.
\newblock \emph{Data analysis using regression and multilevel/hierarchical models}.
\newblock Cambridge university press, 2006.

\bibitem[Harper and Konstan(2015)]{harper2015movielens}
F~Maxwell Harper and Joseph~A Konstan.
\newblock The movielens datasets: History and context.
\newblock \emph{Acm transactions on interactive intelligent systems (tiis)}, 5\penalty0 (4):\penalty0 1--19, 2015.

\bibitem[Heap et~al.(2023)Heap, Leech, and Aitchison]{ml}
Thomas Heap, Gavin Leech, and Laurence Aitchison.
\newblock Massively parallel reweighted wake-sleep.
\newblock \emph{Uncertainty in Artificial Intelligence}, 2023.

\bibitem[Hensman et~al.(2013)Hensman, Fusi, and Lawrence]{hensman2013gaussian}
James Hensman, Nicol{\`o} Fusi, and Neil~D Lawrence.
\newblock Gaussian processes for big data.
\newblock In \emph{Proceedings of the Twenty-Ninth Conference on Uncertainty in Artificial Intelligence}, pages 282--290, 2013.

\bibitem[Hoffman et~al.(2013)Hoffman, Blei, Wang, and Paisley]{hoffman2013stochastic}
Matthew~D Hoffman, David~M Blei, Chong Wang, and John Paisley.
\newblock Stochastic variational inference.
\newblock \emph{Journal of Machine Learning Research}, 2013.

\bibitem[Jordan et~al.(1999)Jordan, Ghahramani, Jaakkola, and Saul]{jordan1999introduction}
Michael~I Jordan, Zoubin Ghahramani, Tommi~S Jaakkola, and Lawrence~K Saul.
\newblock An introduction to variational methods for graphical models.
\newblock \emph{Machine learning}, 37\penalty0 (2):\penalty0 183--233, 1999.

\bibitem[Khan and Nielsen(2018)]{khan2018fast}
Mohammad~Emtiyaz Khan and Didrik Nielsen.
\newblock Fast yet simple natural-gradient descent for variational inference in complex models.
\newblock In \emph{2018 International Symposium on Information Theory and Its Applications (ISITA)}, pages 31--35. IEEE, 2018.

\bibitem[Kingma and Ba(2014)]{kingma2014adam}
Diederik~P Kingma and Jimmy Ba.
\newblock Adam: A method for stochastic optimization.
\newblock \emph{arXiv preprint arXiv:1412.6980}, 2014.

\bibitem[Kingma and Welling(2014)]{kingma2013auto}
Diederik~P Kingma and Max Welling.
\newblock Auto-encoding variational {B}ayes.
\newblock In \emph{International Conference on Learning Representations}, 2014.

\bibitem[Kingma et~al.(2019)Kingma, Welling, et~al.]{kingma2019introduction}
Diederik~P Kingma, Max Welling, et~al.
\newblock An introduction to variational autoencoders.
\newblock \emph{Foundations and Trends{\textregistered} in Machine Learning}, 12\penalty0 (4):\penalty0 307--392, 2019.

\bibitem[Le et~al.(2020)Le, Kosiorek, Siddharth, Teh, and Wood]{le2020revisiting}
Tuan~Anh Le, Adam~R Kosiorek, N~Siddharth, Yee~Whye Teh, and Frank Wood.
\newblock Revisiting reweighted wake-sleep for models with stochastic control flow.
\newblock In \emph{Uncertainty in Artificial Intelligence}, pages 1039--1049. PMLR, 2020.

\bibitem[Leech et~al.(2022)Leech, Rogers-Smith, Monrad, Sandbrink, Snodin, Zinkov, Rader, Brownstein, Gal, Bhatt, et~al.]{leech2022mask}
Gavin Leech, Charlie Rogers-Smith, Joshua~Teperowski Monrad, Jonas~B Sandbrink, Benedict Snodin, Robert Zinkov, Benjamin Rader, John~S Brownstein, Yarin Gal, Samir Bhatt, et~al.
\newblock Mask wearing in community settings reduces sars-cov-2 transmission.
\newblock \emph{Proceedings of the National Academy of Sciences}, 119\penalty0 (23):\penalty0 e2119266119, 2022.

\bibitem[Levine and Casella(2001)]{levine2001implementations}
Richard~A Levine and George Casella.
\newblock Implementations of the monte carlo em algorithm.
\newblock \emph{Journal of Computational and Graphical Statistics}, 10\penalty0 (3):\penalty0 422--439, 2001.

\bibitem[Liu et~al.(2001)Liu, Liang, and Wong]{liu2001theory}
Jun~S Liu, Faming Liang, and Wing~Hung Wong.
\newblock A theory for dynamic weighting in monte carlo computation.
\newblock \emph{Journal of the American Statistical Association}, 96\penalty0 (454):\penalty0 561--573, 2001.

\bibitem[Marin et~al.(2014)Marin, Pudlo, and Sedki]{marin_consistency_2014}
Jean-Michel Marin, Pierre Pudlo, and Mohammed Sedki.
\newblock Consistency of the {Adaptive} {Multiple} {Importance} {Sampling}, May 2014.
\newblock URL \url{http://arxiv.org/abs/1211.2548}.
\newblock arXiv:1211.2548 [math, stat].

\bibitem[Neal(2012)]{neal2012mcmc}
Radford~M Neal.
\newblock Mcmc using hamiltonian dynamics.
\newblock \emph{arXiv preprint arXiv:1206.1901}, 2012.

\bibitem[Neal and Hinton(1998)]{neal1998view}
Radford~M Neal and Geoffrey~E Hinton.
\newblock A view of the em algorithm that justifies incremental, sparse, and other variants.
\newblock In \emph{Learning in graphical models}, pages 355--368. Springer, 1998.

\bibitem[Nguyen et~al.(2018)Nguyen, Li, Bui, and Turner]{nguyen2018variational}
Cuong~V Nguyen, Yingzhen Li, Thang~D Bui, and Richard~E Turner.
\newblock Variational continual learning.
\newblock In \emph{International Conference on Learning Representations}, 2018.

\bibitem[Ortiz and Kaelbling(2000)]{ortiz2000adaptive}
Luis~E Ortiz and Leslie~Pack Kaelbling.
\newblock Adaptive importance sampling for estimation in structured domains.
\newblock \emph{Uncertainty in Artificial Intelligence}, 2000.

\bibitem[Pennanen and Koivu(2006)]{pennanen2004adaptive}
Teemu Pennanen and Matti Koivu.
\newblock An {Adaptive} {Importance} {Sampling} {Technique}.
\newblock In Harald Niederreiter and Denis Talay, editors, \emph{Monte {Carlo} and {Quasi}-{Monte} {Carlo} {Methods} 2004}, pages 443--455, Berlin, Heidelberg, 2006. Springer.
\newblock ISBN 978-3-540-31186-7.
\newblock \doi{10.1007/3-540-31186-6_27}.

\bibitem[Price et~al.(1996)Price, Nero, and Gelman]{price1996bayesian}
Phillip~N Price, Anthony~V Nero, and Andrew Gelman.
\newblock Bayesian prediction of mean indoor radon concentrations for minnesota counties.
\newblock \emph{Health Physics}, 71\penalty0 (LBL-35818-Rev), 1996.

\bibitem[Rezende et~al.(2014)Rezende, Mohamed, and Wierstra]{rezende2014stochastic}
Danilo~Jimenez Rezende, Shakir Mohamed, and Daan Wierstra.
\newblock Stochastic backpropagation and approximate inference in deep generative models.
\newblock In \emph{International conference on machine learning}, pages 1278--1286. PMLR, 2014.

\bibitem[Robert et~al.(1999)Robert, Casella, and Casella]{robert1999monte}
Christian~P Robert, George Casella, and George Casella.
\newblock \emph{Monte Carlo statistical methods}, volume~2.
\newblock Springer, 1999.

\bibitem[Rubinstein and Kroese(2004)]{rubinstein2004cross}
Reuven~Y Rubinstein and Dirk~P Kroese.
\newblock \emph{The cross-entropy method: a unified approach to combinatorial optimization, Monte-Carlo simulation, and machine learning}, volume 133.
\newblock Springer, 2004.

\bibitem[Titsias and L{\'a}zaro-Gredilla(2014)]{titsias2014doubly}
Michalis Titsias and Miguel L{\'a}zaro-Gredilla.
\newblock Doubly stochastic variational {B}ayes for non-conjugate inference.
\newblock In \emph{International Conference on Machine Learning}, pages 1971--1979, 2014.

\bibitem[Turner and Sahani(2011)]{turner2011two}
Richard~Eric Turner and Maneesh Sahani.
\newblock Two problems with variational expectation maximisation for time series models.
\newblock In David Barber, A.~Taylan Cemgil, and Silvia Chiappa, editors, \emph{Bayesian {Time} {Series} {Models}}, pages 104--124. Cambridge University Press, 1 edition, August 2011.
\newblock ISBN 978-0-521-19676-5 978-0-511-98467-9.
\newblock \doi{10.1017/CBO9780511984679.006}.
\newblock URL \url{https://www.cambridge.org/core/product/identifier/CBO9780511984679A041/type/book_part}.

\bibitem[van Osta et~al.(2021)van Osta, Kirkels, van Loon, Koopsen, Lyon, Meiburg, Huberts, Cramer, Delhaas, Haugaa, Teske, and Lumens]{van_osta_uncertainty_2021}
Nick van Osta, Feddo~P. Kirkels, Tim van Loon, Tijmen Koopsen, Aurore Lyon, Roel Meiburg, Wouter Huberts, Maarten~J. Cramer, Tammo Delhaas, Kristina~H. Haugaa, Arco~J. Teske, and Joost Lumens.
\newblock Uncertainty {Quantification} of {Regional} {Cardiac} {Tissue} {Properties} in {Arrhythmogenic} {Cardiomyopathy} {Using} {Adaptive} {Multiple} {Importance} {Sampling}.
\newblock \emph{Frontiers in Physiology}, 12, 2021.
\newblock ISSN 1664-042X.
\newblock URL \url{https://www.frontiersin.org/articles/10.3389/fphys.2021.738926}.

\bibitem[Wainwright et~al.(2008)Wainwright, Jordan, et~al.]{wainwright2008graphical}
Martin~J Wainwright, Michael~I Jordan, et~al.
\newblock Graphical models, exponential families, and variational inference.
\newblock \emph{Foundations and Trends{\textregistered} in Machine Learning}, 1\penalty0 (1--2):\penalty0 1--305, 2008.

\bibitem[Wei and Tanner(1990)]{wei1990monte}
Greg~CG Wei and Martin~A Tanner.
\newblock A monte carlo implementation of the em algorithm and the poor man's data augmentation algorithms.
\newblock \emph{Journal of the American statistical Association}, 85\penalty0 (411):\penalty0 699--704, 1990.

\bibitem[Wingate and Weber(2013)]{wingate2013automated}
David Wingate and Theophane Weber.
\newblock Automated variational inference in probabilistic programming.
\newblock \emph{arXiv preprint arXiv:1301.1299}, 2013.

\bibitem[Zhang et~al.(2018)Zhang, B{\"u}tepage, Kjellstr{\"o}m, and Mandt]{zhang2018advances}
Cheng Zhang, Judith B{\"u}tepage, Hedvig Kjellstr{\"o}m, and Stephan Mandt.
\newblock Advances in variational inference.
\newblock \emph{IEEE transactions on pattern analysis and machine intelligence}, 41\penalty0 (8):\penalty0 2008--2026, 2018.

\end{thebibliography}

%%%%%%%%%%%%%%%%%%%%%%%%%%%%%%%%%%%%%%%%%%%%%%%%%%%%%%%%%%%%%%%%%%%%%%%%%%%%%%%
%%%%%%%%%%%%%%%%%%%%%%%%%%%%%%%%%%%%%%%%%%%%%%%%%%%%%%%%%%%%%%%%%%%%%%%%%%%%%%%
% APPENDIX
%%%%%%%%%%%%%%%%%%%%%%%%%%%%%%%%%%%%%%%%%%%%%%%%%%%%%%%%%%%%%%%%%%%%%%%%%%%%%%%
%%%%%%%%%%%%%%%%%%%%%%%%%%%%%%%%%%%%%%%%%%%%%%%%%%%%%%%%%%%%%%%%%%%%%%%%%%%%%%%
\newpage
\appendix
\onecolumn
\section{Additional Background Details}
\label{app:background}

\subsection{``Global'' importance weighting.} 
Importance weighting is an approach to approximating posterior moments. 
The standard approach to importance weighting, called ``Global'' in \citet{geffner2022variational} is to sample $K$ times from the approximate posterior.
We use \(z^k\in\mathcal{Z}\), for a single sample from the full joint latent space, and use
\begin{align}
  \label{eq:z}
  z &= (z^1, z^2, \dotsc, z^K) \in \mathcal{Z}^K.
\end{align}
for the collection of all $K$ samples.
We sample $z$ by drawing $K$ times from the approximate posterior,
\begin{align}
  \label{eq:Qglobal}
  \Q{z} &= \prod_{k\in\mathcal{K}} \Q{z^k}.
\end{align}
Here, $\mathcal{K}$ is the set of indices for the samples, \(\mathcal{K} = \{1,\dotsc,K\}\).
The self-normalized global importance weighted estimate is,
\begin{subequations}
\label{eq:glob_iw}
\begin{align}
  \label{eq:mglobal}
  m_\glob(z) &= \tfrac{1}{K} \sum_{k\in\mathcal{K}} \frac{r_k(z)}{\Pglob(z)} m(z^k)\\
  \label{eq:rglobal}
  r_k(z) &= \frac{\P{x, z^k}}{\Q{z^k}}\\
  \label{eq:Pglobal}
  \Pglob(z) &= \tfrac{1}{K} \sum_{k\in\mathcal{K}} r_{k}(z).
\end{align}
\end{subequations}
Here, $r_k(z)$ is the joint probability of $z^k$ and $x$ under the generative model divided by the probability of $z^k$ under the approximate posterior, and \(\Pglob(z)\) is an unbiased estimator of the marginal likelihood,
\begin{align}
  \nonumber
  \E[\Q{z}]{\Pglob(z)} &= \E[\Q{z^k}]{\frac{\P{x, z^k}}{\Q{z^k}}}
  = \int dz^k \; \P{x, z^k} = \P{x}.
\end{align}
The first equality arises by taking Eq.~\eqref{eq:Pglobal}, and noticing that each $r_k(z)$ term is IID.

\subsection{Massively parallel importance weighting.}
In standard ``global'' importance sampling above we reweighted $K$ samples from the full joint latent space.
But in massively parallel methods, we exploit the fact that we have multiple latent variables.
Write $z_i^k \in \mathcal{Z}_i$ for the $k$th sample of the $i$th latent variable. Thus all $K$ samples of the $i$th latent variable are written
\begin{align}
    z_i &= (z_i^1, \dots, z_i^K) \in \mathcal{Z}_i^K
\end{align}
and the collection of all samples of all latents is given by
\begin{align}
    z &= (z_1, \dots, z_n) \in \mathcal{Z}^K.
\end{align}
Given indices $\k = (k_1, \dots, k_n) \in \mathcal{K}^n$, we can therefore represent a single sample of the latents as:
\begin{align}
    z^\k &= (z_1^{k_1}, \dots, z_n^{k_n}) \in \mathcal{Z}.
    % \intertext{with indices}
    % \label{eq:k}
    % \k &= (k_1, \dots, k_n) \in \mathcal{K}^n.
\end{align}
Thus, $z$ is all $K$ samples for all $n$ latents.  $z_i$ is all $K$ samples for the $i$th latent.  $z^k$ is the $k$th sample of all $n$ latents.  $z_i^k$ is the $k$th sample for the $i$th latent.

The massively parallel setting requires a slightly modified proposal distribution.
In particular, we define $\Qmp{z}$ with graphical model structure 
\begin{align}
  \Qmp{z} &= \prod_{i=1}^n \Qmpc{z_i}{z_j \text{ for } j \in \qa{i}},
\end{align}
such that $\qa{i}$ is the set of indices of parents of $z_i$ under the graphical model.
The conditional distribution $\Qmpc{z_i}{z_j \text{ for } j \in \qa{i}}$ over all $K$ copies of the $i$th and $j$th latent variables, i.e. $z_i$ and $z_j$, is defined via a single-sample proposal (much like in the ``global'' setting) where single copies of the $i$th and $j$th latents, $z_i'$ and $z_j'$, are distributed as $\Qc{z_i'}{z_j' \text{ for } j \in \qa{i}}$.
This required a scheme for choosing which of the $K$ copies of a parent latent gets used in the sampling of each $K$ copy of the child latent. 
As discussed in \cite{ml}, many such schemes are possible, such as a uniform mixture over the $K$ parent samples or a permutation (chosen uniformly at random) of the $K$ parent samples.
(In all experiments performed for this paper the permutation strategy was used.)

\section{The QEM M-step and massively parallel reweighted wake-sleep}
\label{app:mstep}
Perhaps the optimal approach for fitting an approximate posterior would be maximum likelihood, under samples drawn from the true posterior.
That would give an objective of the form
\begin{align}
  \label{eq:obj}
  \L_\post &= \E[\Pc{z'}{x}]{\log \Q{z'}},
  \intertext{which gives updates of the form,}
  \Delta \eta &= \epsilon \E[\Pc{z'}{x}]{\nabla_\eta \log \Q{z'}},
\end{align}
where $\eta$ are the parameters of the approximate posterior, and $\epsilon$ is a small learning rate.

However, we cannot directly use these updates, as they require samples from the true posterior.
Instead, we can use an importance-weighted approximation to the updates,
\begin{align}
  \Delta \eta &= \epsilon \; \frac{1}{K^n} \sum_{\k\in\mathcal{K}^n} \frac{r_\k(z)}{\Pmp(z)} \nabla_\eta \log \Q{z^\k| x}.
\end{align}
where $z$ is sampled from $\Q{z}$.
These updates are used by massively-parallel reweighted wake-sleep \citep{ml}.
%
%Now, we show that the fixed-points of the QEM M-step are equivalent to the fixed points of the usual massively-parallel reweighted wake-sleep updates for an exponential family approximate posterior.

For exponential family approximate posteriors,
\begin{align}
  \log \Q{z'} &= \eta \cdot T(z') - A(\eta)
  \intertext{where $\eta$ is the natural parameters, $T(z')$ is the sufficient statistics, and $A(\eta)$ is the log-normalizing constant.
  Now, differentiate with respect to $\eta$,}
  \nabla_\eta \log \Q{z'} &= T(z') - \nabla_\eta A(\eta)
  \intertext{where,}
  \nabla_\eta A(\eta) &= \nabla_\eta \log \int dz' \; \exp\b{\eta \cdot T(z')}\\
  &= \frac{\int dz' \; T(z') \exp\b{\eta \cdot T(z')}}{\int dz' \; \exp\b{\eta \cdot T(z')}} \\
  &= \E[\Q{z'}]{T(z')} \\
  &= \mu
  \intertext{are the mean-parameters of $\Q{z'}$.  Thus,}
  \nabla_\eta \log \Q{z'} &= T(z') - \mu.
\end{align}
And we can write the updates as,
\begin{align}
  \Delta \eta(z) &= \epsilon \; \b{\momone(z)-\mu}.
  \intertext{where,}
  \label{eq:momone(z)}
  \momone(z) &= \frac{1}{K^n} \sum_{\k\in\mathcal{K}^n} \frac{r_\k(z)}{\Pmp(z)} T(z')
\end{align}
is a moment estimate based on a single sample, $z$ of $K$ samples for all latent variables.
Here, the first term is the expected sufficient statistics of the massively parallel importance weighted approximation of the true posterior, while the second term is the expected sufficient statistics of the approximate posterior itself.
Thus, for sufficiently small learning rates, $\epsilon$, standard massively parallel reweighted wake sleep is at a fixed point when the approximate posterior sufficient statistics are equal to the massively parallel importance weighted sufficient statistics,
\begin{align}
  \label{eq:mu_fixed_point}
  \mu &= \E[{\Q{z}}]{\momone(z)}
\end{align}
This provides the inspiration for the QEM M-step updates, which directly set the expected sufficient statistics of the approximate posterior to an the importance-weighted estimate of the true posterior moments (which thus has the same fixed points).
Specifically, the QEM M-step updates are,
\begin{align}
  \Delta \mu &= \lambda \b{\momone(z) - \mu},
\end{align}
which obviously have the same fixed-point.

Note that these QEM M-step updates are equivalent to an EMA (Eq.~\ref{eq:ema}).
Importantly, we would not get the same fixed points if we took an EMA over another quantity.
For instance, if we took $\etaone$, % were the 
\begin{align}
  \Delta \eta &= \lambda \b{\etaone(z) - \eta},
\end{align}
would imply a fixed point of,
\begin{align}
  \eta &= \E[{\Q{z}}]{\etaone(z)}
\end{align}
This is not the same as the fixed point in Eq.~\eqref{eq:mu_fixed_point}  as the mapping from natural, $\mu$, to mean, $\eta$, parameters is nonlinear.

%\section{Exponential moving average over moments}
%To ensure that we match the fixed point of massively parallel reweighted wake-sleep, under 
%A single QEM M-step uses importance weighted moment estimates with expectation,
%\begin{align}
%  \E{\momone} &=  \E[{\Q{z}}]{\frac{1}{K^n} \sum_{\k\in\mathcal{K}^n} \frac{r_\k(z)}{\Pmp(z)} T(z')}.
%\end{align}
%Having this expectation is necessary, as 
%Note that the reweighted wake-sleep updates () and the QEM M-step updates take the expectation of the sufficient statistics over $\Q{z}$ (remember there is one sample of $z$ for each step).

\section{Convergence of QEM in the limit as K goes to infinity}
\label{app:inf}
As $K \rightarrow \infty$, (massively parallel) importance sampling gives the right moments under mild conditions \citep{robert1999monte}, such as the support of $\Q{z'}$ covering the support of $\Pc{z'}{x}$.
In that setting, QEM converges in a single step.

\section{Proof of Theorem~\ref{thm}}
\label{app:thm}
We can write the EMA in Eq.~\eqref{eq:ema} as,
\begin{align}
  m_t &= (1-\lambda) \sum_{\tau=1}^t \lambda^{t-\tau} \momone_\tau.
\end{align}
As $\momone_\tau$ is unbiased, we have $\E{\momone_\tau} = \momtrue$,
\begin{align}
  \E{m_t} &= (1-\lambda) \sum_{\tau=1}^t \lambda^{t-\tau} \momtrue.\\
  \intertext{Using the standard expression for geometric series,}
  \E{m_t} &= (1-\lambda) \frac{1 - \lambda^t}{1-\lambda} \momtrue = \b{1 - \lambda^t} \momtrue.
\end{align}
Consider $\lambda$ increasing as $t$ increases,
\begin{align}
  \lambda(t)^t &= (1-t^{-p})^t.
\end{align}
To evaluate this limit, we use composition of limits\footnote{e.g.\ \url{https://mathcenter.oxford.emory.edu/site/math111/limitsOfCompositions/}}, with
\begin{align}
  \lim_{t \rightarrow \infty} \lambda(t)^t &= \lim_{t \rightarrow \infty} \exp\b{t \log (1-t^{-p})}
\end{align}
We can compute the limit inside the exponential using the L'Hopital's rule.
\begin{align}
  \log \lambda(t)^t &= t \log (1-t^{-p})\\
  \lim_{t \rightarrow \infty} t \log (1-t^{-p}) &= \lim_{t \rightarrow \infty} \frac{\log (1-t^{-p})}{t^{-1}} \\
  &= \lim_{t \rightarrow \infty}\frac{\tfrac{p t^{-p-1}}{1-t^p}}{-t^{-2}}\\
  \intertext{Using $p < 0 < 1$,}
  &= \lim_{t \rightarrow \infty} -p t^{-p+1} = \infty.
  \intertext{Thus,}
  \label{eq:lim:lambda^t}
  \lim_{t \rightarrow \infty} \lambda(t)^t &= \exp\b{- \log \lambda(t)^t} = 0
  \intertext{and,}
  \lim_{t \rightarrow \infty} \E{m_t} &= \momtrue.\\
\end{align}
To compute the variance of $m_t$, we need 
\begin{align}
  \Var{m_t} &= \E{(m_t-\E{m_t})^2} \\
  &= (1-\lambda)^2 \E{\b{\sum_{\tau=1}^t \lambda^{t-\tau} (\momone_\tau-\momtrue)} \b{\sum_{\tau'=1}^t \lambda^{t-\tau'} (\momone_{\tau'}-\momtrue)}}.\\
  \intertext{Pushing the expectation inside the sum,}
  \Var{m_t} &= (1-\lambda)^2 \sum_{\tau=1}^t \sum_{\tau'=1}^t \lambda^{2 t-\tau-\tau'} \E{(\momone_\tau-\momtrue) (\momone_{\tau'}-\momtrue)}.\\
  \intertext{All the terms for $\tau\neq\tau'$ are zero.  Taking only the terms for which $\tau=\tau'$ and using $\E{(\momone_\tau-\momtrue)^2} = \Var{\momone_\tau} = v$,}
  \Var{m_t} &= (1-\lambda)^2 \sum_{\tau=1}^t \lambda^{2 (t-\tau)} v.
  \intertext{As this is again a geometric series,}
  \Var{m_t} &= (1-\lambda)^2 \sum_{\tau=1}^t \b{\lambda^2}^{t-\tau} v,
  \intertext{we have,}
  \Var{m_t} &= (1-\lambda)^2 \frac{1-\lambda^{2 t}}{1-\lambda^2} v.
\end{align}
We begin by noting (by analogy with the derivation for the mean),
\begin{align}
  \lim_{t \rightarrow \infty} \lambda(t)^{2t} &= \lim_{t \rightarrow \infty} (1-t^{-p})^{2t} = 0, \\
  \intertext{so $\lim_{t \rightarrow \infty} 1 - \lambda(t)^{2t} = 1$, and}
  \lim_{t \rightarrow \infty} \Var{m_t} &= \frac{(1-\lambda)^2}{1-\lambda^2} v.
\end{align}
And as,
\begin{align}
  \lim_{t \rightarrow \infty} \lambda(t) &= \lim_{t \rightarrow \infty} (1-t^{-p}) = 1, \\
  \intertext{we have,}
  \lim_{t \rightarrow \infty} \Var{m_t} &= \lim_{\lambda \rightarrow 1} \frac{(1-\lambda)^2}{1-\lambda^2} v.
  \intertext{Applying L'Hopital's rule,}
  \lim_{t \rightarrow \infty} \Var{m_t} &= \lim_{\lambda \rightarrow 1} \frac{- 2 (1-\lambda)}{2 \lambda} v = 0.
\end{align}
Thus, the variance converges to zero.

However, it does not establish that reducing the KL-divergence will definitely reduce the variance and reduce the required number of samples.
For that result, we look to \citet{chatterjee2018sample}.

\section{QEM is invariant to reparameterization}
\label{app:reparam}

Consider ``reparameterizing'' a model by rewriting the original latent variable, $z'$ 
\begin{align}
  \label{eq:z'az''}
  \zt' &= \tfrac{1}{\alpha} z',
\end{align}
as an alternative latent variable, $\zt'$, which is related to the original latent variable by a multiplicative scaling.

The original model is given by,
\begin{subequations}
\label{eq:Pz'}
\begin{align}
  \P{z'} &= \pi(z')\\
  \P{x| z'} &= f(x, z')
\end{align}
\end{subequations}
where $\pi(z')$ and $f(x, z')$ are functions giving the probability for the prior and likelihood.
We get exactly the same distribution over the data if we reparameterize by setting $z' = \alpha \zt'$,
\begin{subequations}
\label{eq:Pz''}
\begin{align}
  \P{\zt'} &=\alpha \pi(\alpha \zt')\\
  \P{x| \zt'} &= f(x, \alpha \zt')
\end{align}
\end{subequations}
Now, consider a family of exponential family approximate posterior densities, $\Q{z'; m}$ where $m$ is mean the parameters (e.g.\ for a Gaussian, $m$ is the raw first and second moment).
This family must be closed under multiplication, in the sense that for $z'$ and $\zt'$ related by Eq.~\eqref{eq:z'az''}, there must exist $m$ and $\mt$ such that,
\begin{subequations}
\label{eq:closed_mul1}
\begin{align}
  z' &\sim \Q{z'; m},\\
  \zt' &\sim \Q{\zt'; \mt}.
\end{align}
\end{subequations}
The parameters of the distributions are related by a function, $f$,
\begin{align}
  \label{eq:closed_mul2}
  m &= f(\mt, \alpha).
\end{align}
For instance, if we have a Gaussian, then
\begin{align}
  m &= \begin{pmatrix}
    \mu \\
    \mu^2 + \sigma^2 \\
  \end{pmatrix}\\
  \mt &= \begin{pmatrix}
    \mut \\
    \mut^2 + \sigmat^2 \\
  \end{pmatrix}\\
  f\b{\begin{pmatrix}
    \mut \\
    \mut^2 + \sigmat^2 \\
  \end{pmatrix}; \alpha}
  &= \begin{pmatrix}
    \alpha \mut \\
    \alpha^2 \b{\mut^2 + \sigmat^2} \\
  \end{pmatrix}
\end{align}
Note that while most standard distributions that are members of scale or location-scale families that are closed under multiplication, not all are, with perhaps the easiest example being the Poisson.

Finally, we need reparameterization-invariant sampling, in the sense that, sampling proceeds by transforming an IID random variable, $\xi$,
\begin{align}
  z' = g(\xi; m) &\sim \Q{z'; m},\\
  \zt' = g(\xi; \mt) &\sim \Q{\zt'; \mt}.
\end{align}
such that,
\begin{align}
  \label{eq:reparam_sampling}
  g(\xi; m) &= \alpha g(\xi; \mt)
\end{align}
if $m = f(\mt, \alpha)$.
This property holds for common sampling algorithms, e.g.\ when using inverse transform sampling.

\begin{theorem}
  QEM is invariant to reparameterization, if three properties hold:
  \begin{itemize}
    \item The model is reparameterized in the sense of Eq.~\eqref{eq:Pz'} and Eq.~\eqref{eq:Pz''}.
    \item Sampling is reparameterized in the sense of Eq.~\eqref{eq:reparam_sampling}.
    \item The approximate posterior family is closed under multiplication, in the sense of Eq.~\eqref{eq:closed_mul1} and Eq.~\eqref{eq:closed_mul2}.
  \end{itemize}
  In the sense that if the initial parameters of the approximate posterior are reparameterized,
  \begin{align}
    m_0 &= f(\mt_0, \alpha),
  \end{align}
  then the approximate posterior parameters, $m_t$ and $\mt_t$ obtained by QEM at future timesteps remain reparameterized at all future timesteps,
  \begin{align}
    m_t &= f(\mt_t, \alpha).
  \end{align}
\end{theorem}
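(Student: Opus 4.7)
The plan is to proceed by induction on the iteration index $t$. The base case $t=0$ is exactly the hypothesis of the theorem, so the work reduces to the inductive step: assuming $m_{t-1} = f(\mt_{t-1}, \alpha)$, show that $m_t = f(\mt_t, \alpha)$.

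First I would track what happens in each line of Algorithm~\ref{alg:QEM} under the two parameterizations run in lockstep. Line~3 constructs $Q_t$ from $m_{t-1}$ and $\tilde Q_t$ from $\mt_{t-1}$; by closure under multiplication (Eqs.~\eqref{eq:closed_mul1}--\eqref{eq:closed_mul2}) these are related by the scaling $z' = \alpha \zt'$ on the reparameterized latent. Invoking the reparameterization-invariant sampling assumption (Eq.~\eqref{eq:reparam_sampling}) with a shared noise seed $\xi$ then gives samples $z^\k$ and $\zt^\k$ satisfying $z^\k = \alpha\zt^\k$ coordinate-wise in the reparameterized slot.

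Next I would analyze the MPIW moment estimate of line~5 by tracking three quantities: the joint ratio $r_\k$, the normalizer $\Pmp(z)$, and the sufficient-statistic evaluation $T(z^\k)$. Writing the model densities in reparameterized form via Eq.~\eqref{eq:Pz''} and applying the standard change-of-variables formula to every factor of $\Qmp$ that involves the rescaled latent, the Jacobian factors of $\alpha$ appear in matching places in numerator and denominator of $r_\k$ and cancel, giving $r_\k(z) = \tilde r_\k(\zt)$ and $\Pmp(z) = \tilde{\mathcal P}_{\mp}(\zt)$. Hence the self-normalized weights $r_\k / \Pmp$ are identical in the two runs. Because the sufficient statistics of the scale or location-scale family transform coordinate-wise as $T_j(\alpha \zt) = \alpha^{k_j} T_j(\zt)$ for some exponents $k_j$ that depend on the family, and because these same exponents determine $f(\cdot,\alpha)$, one concludes $\momone_t = f(\momonet_t, \alpha)$.

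Finally I would close the induction through the EMA step. The map $f(\cdot, \alpha)$ is diagonal multiplication by powers of $\alpha$ and therefore linear in its first argument when $\alpha$ is fixed, so
\begin{align}
f(\mt_t,\alpha) &= \lambda f(\momonet_t,\alpha) + (1-\lambda) f(\mt_{t-1},\alpha) \\
&= \lambda \momone_t + (1-\lambda) m_{t-1} = m_t,
\end{align}
completing the inductive step.

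The main obstacle I expect is the density bookkeeping in the middle step: verifying that when only a single latent is rescaled while its descendants depend on it through conditional distributions, the change-of-variables Jacobian appears in exactly one matching numerator/denominator pair and all other factors are left untouched. This requires enumerating every place the rescaled latent enters---as a sample site, as a conditioner for children in the proposal factorization $\prod_i \Qmpc{z_i}{z_j \text{ for } j \in \qa{i}}$, and inside the likelihood---and checking that the factor $\alpha$ introduced by the change of variables appears symmetrically in $\P$ and $\Qmp$. A secondary subtlety is that the final EMA step silently uses linearity of $f(\cdot,\alpha)$ in its first argument; this linearity is not explicitly axiomatized in the hypotheses, but it follows from the fact that scale-family mean parameters are monomials in the scale, which covers the standard exponential families (Gaussian, Gamma, Beta via $\log$-transformed sufficient statistics, etc.) that motivate the theorem.
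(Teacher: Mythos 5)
Your induction, the lockstep-sampling construction, the Jacobian cancellation giving $r_\k(z)=r_\k(\zt)$ and hence identical self-normalized weights, and the conclusion $\momone_t = f(\momonet_t,\alpha)$ all match the paper's proof step for step. Where you diverge is exactly the step the paper flags as the subtle one: pushing the reparameterization through the EMA. You do this by asserting that $f(\cdot,\alpha)$ is linear in the mean parameters (diagonal multiplication by powers of $\alpha$) and commuting it through the convex combination. The paper deliberately avoids this: it states that the EMA extension is ``surprisingly involved due to the nonlinearity of $f$'' and instead unrolls the EMA into a single self-normalized weighted average over the samples from \emph{all} past timesteps, $m_t = \tfrac{1-\lambda}{K^n}\sum_{\tau=1}^t\sum_{\k}\lambda^{t-\tau}\tfrac{r_\k(z)}{\Pmp(z)}T(z^\k)$, and then reapplies the single-step argument (same weights, reparameterized samples) to that one big average. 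Your route is shorter and is valid whenever $f(\cdot,\alpha)$ is affine in its first argument, which does hold for the Gaussian case the paper actually uses ($f = \mathrm{diag}(\alpha,\alpha^2)$) and, with the affine rather than linear form, for families whose sufficient statistics include $\log z'$ (e.g.\ Gamma, where $\mathrm{E}[\log z']$ picks up an additive $\log\alpha$, so $T_j(\alpha\zt)=\alpha^{k_j}T_j(\zt)$ is false but affineness survives, and affine maps still commute with convex combinations).

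The caveat you should make explicit: linearity (or affineness) of $f$ is \emph{not} among the theorem's stated hypotheses --- closure under multiplication only posits the existence of some $f$ with $m = f(\mt,\alpha)$ --- so as written your final step proves a slightly different theorem, one with an extra structural assumption on $f$. You acknowledge this in your closing remark, which is the right instinct, but to be airtight you should either (i) add affineness of $f(\cdot,\alpha)$ as a fourth hypothesis, or (ii) adopt the paper's unrolling device, which reduces the EMA case to the same ``weighted average of reparameterized sufficient statistics'' claim already needed for the single-step case and so introduces no new assumption beyond what the one-step argument uses. Your bookkeeping concern about the change-of-variables factors appearing symmetrically in $\bareP$ and $\bareQ_\mp$ is legitimate but is handled in the paper at the same level of detail as in your sketch, so it is not a point of divergence.
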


\begin{proof}
  Here, we do a proof by induction, showing that if the parameters are reparameterized ($m_t = f(\mt_t, \alpha)$) at timestep $t$, then they are also reparameterized at timestep $t+1$ (i.e.\ after a QEM update; $m_{t+1} = f(\mt_{t+1}, \alpha)$).
  The base case, $m_0 = f(\mt_0, \alpha)$, holds by assumption.

  Now, if we start with a reparameterized approximate posterior, $m_t = f(\mt_t, \alpha)$, and sampling is reparameterized, then all $K$ samples of $n$ latent variables in QEM, are related by multiplication,
  \begin{align}
    {z_i'}^k &= \alpha {z_i''}^k.
  \end{align}
  Under the reparameterized model, the importance weights are exactly the same under the reparameterized model, as the importance weights depend only on $r_\k(z)$, the ratio of the generative to the approximate posterior probabilities,
  \begin{align}
    r_\k(z) &= \frac{\P{x, z^\k}}{\Q{z^\k}} = \frac{\P{x, \zt^\k}}{\Q{\zt^\k}} = r_\k(\zt) 
    \intertext{because,}
    \P{x, \zt^\k} &= \alpha \P{x, z^\k}\\
    \Q{x, \zt^\k} &= \alpha \Q{z^\k}.
  \end{align}
  As the importance weights are the same and the samples are reparameterized, the moment estimates are reparameterized in the sense that,
  \begin{align}
    \momone_{t+1} &= f(\momonet_{t+1}, \alpha).
  \end{align}
  Thus, if we just update the approximate posterior moments using the previous sample, then the inductive argument holds.
  
  While it might seem that the extension to the EMA case is straightforward, it actually turns out to be surprisingly involved due to the nonlinearity of $f(\momonet_{t+1}, \alpha)$.
  To prove that reparameterization holds after the EMA, we need to note that the EMA can be understood as computing the moments on a very large weighted sample.
  Specifically, substituting Eq.~\eqref{eq:momone(z)} into Eq.~\eqref{eq:ema},
  \begin{align}
    m_t &= \frac{1-\lambda}{K^n} \sum_{\tau=1}^t \sum_{\k\in\mathcal{K}^n}  \lambda^{t-\tau} \frac{r_\k(z)}{\Pmp(z)} T(z^\k).
  \end{align}
  This can be understood as a weight average over all samples at all timesteps.
  Again, the importance weights at all timesteps are the same, and the samples are reparameterized, so the moments are also reparameterized.
\end{proof}

To demonstrate this invariance empirically we learn approximate posteriors using QEM, MP RWS and MP VI for a modified version of each aforementioned model where some latent variables have been reduced by a factor of $10^{-1}, 10^{-2}, 10^{-3}$ or $10^{-4}$ (before being scaled back up for use later in the graphical model). 
The full reparameterized model specifications can be found in Appendix~\ref{app:exps}.

% For example, in the NYC Bus Breakdown model instead of the delay (in minutes) of a specific bus journey being modelled by

% \begin{equation}
% \begin{split}
% % \nonumber
% \mathrm{total \ count} &\sim \Normal(0,1) \\
% \mathrm{logits} &\sim \Normal(\mu_{\mathrm{logits}}, \sigma_{\mathrm{logits}}^2) \\ 
% \mathrm{delay} &\sim \mathrm{NegativeBinomial}(\exp(\mathrm{total \ count}), \mathrm{logits}), 
% \end{split}
% \end{equation}

% we scale $\mathrm{total \ count}$ by $\frac{1}{10}$ and $\mathrm{logits}$ by $\frac{1}{1000}$,

% \begin{equation}
% \begin{split}
% % \nonumber
% \mathrm{total \ count} &\sim \Normal(0,1/10) \\
% \mathrm{logits} &\sim \Normal(\mu_{\mathrm{logits}} / 1000, (\sigma_{\mathrm{logits}}/1000)^2) \\ 
% \mathrm{delay} &\sim \mathrm{NegativeBinomial}(\exp(10 * \mathrm{total \ count}), 1000*\mathrm{logits}).
% \end{split}
% \end{equation}
% The full reparameterized model specifications can be found in appendix \ref{app:exps}.

% the user level latent $\mathbf{z}$ is reparamaterised as so:

% \begin{align}
% \nonumber
% \m &\sim \Normal(\mathbf{0}_{18}, \I) \\
% \nonumber
% \boldsymbol{\psi} &\sim \Normal(\mathbf{0}_{18}, \I) \\
% \nonumber
% \mathbf{z}_m &\sim \Normal(\frac{\m}{10},\frac{\exp(\boldsymbol{\psi})}{10} \I), \ m=1,\dotsc,M \\
% \label{eq:movielens_vi_rws}
% \mathrm{Rating}_{mn} &\sim \operatorname{Bernoulli}(\sigma((10\mathbf{z}_m)^\intercal \mathbf{x}_n)), \ n=1,\dotsc,N
% \end{align}

Figure~\ref{fig:reparam} shows that QEM retains identical  performance (barring some small numerical differences in the occupancy model results) on these reparameterized models as on the original parameterizations.
MP VI and MP RWS, on the other hand, struggle to reach the same ELBOs and predictive log-likelihoods as before, experiencing much noisier and slower optimization.

% greatly out-performs all other methods in this setting, not only saturating the ELBO and predictive log-likelihood earlier, but reaching higher predictive log-likelihoods than all other methods.

\section{Experimental datasets and models}
\label{app:exps}

\subsection{Experiment details}
To obtain the results discussed in section \ref{sec:results}, we use graphical models for each dataset, defining both a generative distribution $\P{x, z^\k}$ and an approximate posterior/proposal distribution $\Qmp{z^\k}$ where $x$ is data and $z^\k$ represents a single sample from the joint latent space.
We then run 250 optimization steps of QEM, MP VI and MP RWS, at each iteration obtaining ELBOs, predictive log-likelihoods (averaged over 100 predictive samples) and posterior moment estimates using the massively parallel approach devised in \cite{mom}.  
This was repeated five times with different random seeds, with means and standard errors reported over these five runs.
Similarly we repeated HMC runs five times using the BlackJAX NUTS implementation \citep{cabezas2024blackjax}, using default hyperparameters. To adapt the parameters of NUTS we used 100, 300 and 1000 warm-up iterations, and drew 1000 samples for MovieLens and 3000 for both Bus Breakdown and Radon.

\begin{table}[!hbt]
\caption{Memory consumption and time comparison for QEM, MP RWS and MP VI on each of the models (with maximum $K$ values used per model).}
\centering
\begin{tabular}{lllll}
\toprule
Model               & Method    & Memory Used (GB) & Time for 250 Iterations \\ \midrule
Bus Breakdown $K=30$& QEM       & 0.05             &  11.7s   \\
                    & MP RWS    & 0.06             &  18.6s   \\
                    & MP VI     & 0.06             &  19.7s   \\ \midrule
Movielens $K=30$    & QEM       & 1.10             &  19.8s   \\
                    & MP RWS    & 1.11             &  49.6s   \\
                    & MP VI     & 1.11             &  62.7s   \\ \midrule
Occupancy $K=30$    & QEM       & 8.80             &  116.9s  \\
                    & MP RWS    & 8.80             &  223.4s  \\ \midrule
Radon $K=30$        & QEM       & 0.01             &  13.4s   \\
                    & MP RWS    & 0.01             &  13.3s   \\
                    & MP VI     & 0.01             &  15.5s   \\ \midrule
Covid $K=10$        & QEM       & 2.76             &  13.4s   \\
                    & MP RWS    & 2.77             &  18.2s   \\
                    & MP VI     & 2.77             &  20.8s   \\ 
\bottomrule
\label{tab:experiments}
\end{tabular}
\end{table}

Learning rates for QEM, MP VI and MP RWS were chosen from $\{0.3, 0.1, 0.03, 0.01, 0.003, 0.001\}$ based on which led to the highest ELBO after 125 iterations.

For the MovieLens QEM, MP VI and MP RWS experiments, we split the dataset along the ``user'' dimension every 20 users during the calculation of approximate posterior updates in order to reduce the memory requirement at any one time during execution.

All experiments were performed on a 24GB NVIDIA RTX 3090, with memory consumption reported in Table~\ref{tab:experiments} alongside the time taken to perform one run of 250 approximate posterior updates.

Note that more preliminary experiments were performed.
In total, the compute time for all experiments used in generating the results of paper was approximately 250 GPU hours.

% Note that in order to tune learning rates, values of $K$, and nclude runs with learning rates which did not end up in the paper's main figures (either due to suboptimality or experimental failure from numerical instability), extra iterations which did not get included since methods had already converged, and values of $K$ which are only reported in Fig.~\ref{fig:models_time_per_iter}.
% Furthermore, obtaining HMC results (for which we also averaged over five runs) took 4 minutes for Bus Breakdown, 26 minutes for Movielens and 5 minutes for Radon.
% The total compute time for all experiments is then 249 hours and 25 minutes.

\subsection{Bus delay dataset}
\label{app:bus}

This experiment involves modelling New York school bus journeys delays,\footnote{Dataset: \url{http://data.cityofnewyork.us/Transportation/Bus-Breakdown-and-Delays/ez4e-fazm}\\ Terms of use: \url{http://opendata.cityofnewyork.us/overview/\#termsofuse}} using data supplied by the City of New York \citep{nyc2023bus}.
Given the year, $y$, borough, $b$, bus company, $c$ and journey type, $j$, the aim is to predict whether a given delay is longer than 30 minutes.
In particular, the data comprises 57 bus companies supplying 6 journey types (e.g. pre-K/elementary school route, general education AM/PM route etc.) over the years \(2015-2022\) (inclusive) in the five New York boroughs (Brooklyn, Manhattan, The Bronx, Queens, Staten Island) and other surrounding areas (Nassau County, New Jersey, Connecticut, Rockland County, Westchester).
We subsample the dataset at random to select $Y=2$ years and $B=3$ boroughs, constructing a training set of $I=150$ delayed buses for each borough-year combination and an equally-sized test set of a further $150$ delayed buses per combination on which to calculate predictive log-likelihood.
As such, we can uniquely indentify each delay by the year, $y$, the borough, $b$, and the index, $i$, giving $\mathrm{delay}_{ybi}$
The delays are recorded as an indicator: 1 if the delay is greater than 30 minutes, 0 otherwise.

To model the impact of each feature (year, borough, bus company and journey type) on the delaying of a bus, we use a hierarchical latent variable model in which the presence of a delay is modelled as a Bernoulli random variable.
The expected presence of a delay is then described by a latent variable, $\mathrm{logits}_{ybi}$, with one logit for each delayed bus.
The logits are modelled as a sum of three terms which are themselves latent variables to be inferred: one for the borough and year jointly, one for the bus company and one for the journey type.

The first term considers the year and borough jointly: $\mathrm{YearBoroughWeight}_{yb}$. 
This has a hierarchical prior: we begin by sampling a global mean and variance, $\mathrm{GlobalMean}$ and $\mathrm{GlobalVariance}$; then for each year, we use $\mathrm{GlobalMean}$ and $\mathrm{GlobalVariance}$ to sample a mean for each year, $\mathrm{YearMean}_y$. 
Additionally, we sample a variance for each year, $\mathrm{YearVariance}_y$.
Finally, we sample $\mathrm{YearBoroughWeight}_{yb}$ from a Gaussian distribution with a year-dependent mean, $\mathrm{YearMean}_y$, and variance $\exp(\mathrm{BoroughVariance}_b)$.

The two other terms that contribute towards the logits are the weights for the bus company and journey type, which are modelled similarly.
We have one latent weight for each bus company, $\mathrm{CompanyWeight}_c$, with $c\in\{1,\dotsc,57\}$, and for each journey type, $\mathrm{JourneyTypeWeight}_j$, with $j\in\{1,\dotsc,6\}$.
Given the bus company, $b_{ybi}$, and journey type, $j_{ybi}$, for each delayed bus journey (remember that a particular delayed bus journey is uniquely identified by the year, $y$, borough, $b$ and index $i$),  we use a table to pick out the right company weight and journey type weight, $\mathrm{CompanyWeight}_{c_{ybi}}$ and $\mathrm{JourneyTypeWeight}_{j_{ybi}}$ which then contribute towards the sum giving us $\mathrm{logits_{ybi}}$.
The final generative model is given by

\begin{equation}
\label{eq:bus model }
\begin{split}
\P{\mathrm{GlobalVariance}} =& \ \Normal(\mathrm{GlobalVariance}; 0,1) \\
\P{\mathrm{GlobalMean}} =& \ \Normal(\mathrm{GlobalMean}; 0,1)  \\
\Pc{\mathrm{YearMean}_y}{\mathrm{GlobalMean},\mathrm{GlobalVariance}} =& \ \Normal(\mathrm{YearMean}_y; \mathrm{GlobalMean} \\ & \quad \; \ \exp(\mathrm{GlobalVariance})), \\ % , \ y\in\{1,\dots,Y\} \\
\P{\mathrm{YearVariance}_b} =& \ \Normal(\mathrm{YearVariance}_b; 0, 1) \\ % , \  y\in\{1,\dots,Y\} \\
\Pc{\mathrm{YearBoroughWeight}_{yb}}{\mathrm{YearMean}_y, \mathrm{YearVariance}_b} =& \ \Normal(\mathrm{YearBoroughWeight}_{yb};  \\ & \quad \;\  \mathrm{YearMean}_y, \\ & \quad \;\ \exp(\mathrm{YearVariance}_b))\\  % , \ b\in\{1,\dots,B\}\\
% \mathrm{WeightVariance} &\sim \Normal(0,10^{-4}),  \\
\P{\mathrm{CompanyWeight}_{c}} =& \ \mathcal{N}(\mathrm{CompanyWeight}_{c}; 0, 1), \\ % , \ c\in\{1,\dots,C\} \\
\P{\mathrm{JourneyTypeWeight}_{j}} =& \ \mathcal{N}(\mathrm{JourneyTypeWeight}_{j}; 0, 1) \\ % , \ j\in\{1,\dots,J\} \\
\mathrm{logits}_{ybi} =& \ \mathrm{YearBoroughWeight}_{yb} \\ & \quad \; \ + \mathrm{CompanyWeight}_{c_{ybi}} \\ & \quad \; \ + 
\mathrm{JourneyTypeWeight}_{j_{ybi}}\\
\Pc{\mathrm{delay}_{ybi}}{\mathrm{logits}_{ybi}} =& \ \mathrm{Bernoulli}(\mathrm{delay}_{ybi};\mathrm{logits}_{ybi}), 
\end{split}
\end{equation}

where $y \in \{1,\ldots,Y\}, b \in \{1,\ldots,B\}, c \in \{1,\ldots,C\}$ and $j \in \{1,\ldots,J\}$.
The corresponding graphical model is given in Figure~\ref{fig:bus_gm}.
We initialise the factorised approximate posterior $Q$ very similarly:

\begin{equation}
\label{eq:bus model_Q}
\begin{split}
\Q{\mathrm{GlobalVariance}} =& \ \Normal(\mathrm{GlobalVariance}; 0,1) \\
\Q{\mathrm{GlobalMean}} =& \ \Normal(\mathrm{GlobalMean}; 0,1)  \\
\Q{\mathrm{YearMean}_y} =& \ \Normal(\mathrm{YearMean}_y; 0,1) \\ % , \ y\in\{1,\dots,Y\} \\
\Q{\mathrm{YearVariance}_b} =& \ \Normal(\mathrm{YearVariance}_b; 0, 1) \\ % , \ y\in\{1,\dots,Y\} \\
\Q{\mathrm{YearBoroughWeight}_{yb}} =& \ \Normal(\mathrm{YearBoroughWeight}_{yb}; 0, 1) \\ % , \ b\in\{1,\dots,B\}\\
% \mathrm{WeightVariance} &\sim \Normal(0,10^{-4}),  \\
\Q{\mathrm{CompanyWeight}_{c}} =& \ \mathcal{N}(\mathrm{CompanyWeight}_{c}; 0, 1) \\ % , \ c\in\{1,\dots,C\} \\
\Q{\mathrm{JourneyTypeWeight}_{j}} =& \ \mathcal{N}(\mathrm{JourneyTypeWeight}_{j}; 0, 1) \\ % , \ j\in\{1,\dots,J\}. \\
\end{split}
\end{equation}

\begin{figure}[!htb]
\begin{center}
\resizebox{0.9\textwidth}{!}{%
  \begin{tikzpicture}
    % nodes
 
    %I
    \node[obs] (delay) {\(\mathrm{Delay}_{ybi}\)};%
 
    \node[latent, left=of delay, xshift=-1.25cm] (companyweight) {\(\mathrm{CompanyWeight}_c\)};  
    \node[latent, above=of companyweight]  (journeyweight) {\(\mathrm{JourneyTypeWeight_j}\)};  

    %J
    \node[latent,right=of delay] (yearboroughweight) {\(\mathrm{YearBoroughWeight}_{yb}\)}; %
    
    \node[latent, above=of yearboroughweight] (yearvariance) {\(\mathrm{YearVariance}_y\)};
    %M
    \node[latent,left=of yearvariance] (yearmean) {\(\mathrm{YearMean}_y\)}; %
    
    \node[latent, above=of yearmean, yshift=0.5cm] (globalmean) {\(\mathrm{GlobalMean}\)};
    \node[latent, right=of globalmean] (globalvariance) {\(\mathrm{GlobalVariance}\)};
    
    % plate
     \plate [inner sep=.35cm,] {plateID} {(delay)} {\(\mathrm{I}\) Ids}; %
     \tikzset{plate caption/.append style={below right=0pt and 0pt of #1.south east}}
     \plate [inner sep=.6cm] {plateboroughs} {(yearboroughweight)(plateID)} {\(B\) Boroughs}; 
     \plate [inner sep=.6cm] {plateyears} {(yearmean)(yearvariance)(plateboroughs)} {\(Y\) Years}; 
     
    %edges
     \edge {globalmean,globalvariance} {yearmean}
     \edge {yearmean,yearvariance} {yearboroughweight}
     \edge {yearboroughweight} {delay}
     \edge {journeyweight,companyweight} {delay}
     \end{tikzpicture} }
\caption{Graphical model for the NYC Bus Breakdown dataset}
\label{fig:bus_gm}
\end{center}
\end{figure}

\subsubsection{Reparameterized bus delay model}

To provide a reparameterized bus delay dataset we simply rescale the mean and standard deviation of the $\mathrm{YearBoroughWeight}_{yb}$ latent variable, then apply the inverse scaling after sampling:

\begin{equation}
\label{eq:bus model reparam}
\begin{split}
\P{\mathrm{GlobalVariance}} =& \ \Normal(\mathrm{GlobalVariance}; 0,1) \\
\P{\mathrm{GlobalMean}} =& \ \Normal(\mathrm{GlobalMean}; 0,1)  \\
\Pc{\mathrm{YearMean}_y}{\mathrm{GlobalMean},\mathrm{GlobalVariance}} =& \ \Normal(\mathrm{YearMean}_y; \mathrm{GlobalMean},  \\ & \quad \; \ \ \exp(\mathrm{GlobalVariance})) \\ % , \ y\in\{1,\dots,Y\} \\
\P{\mathrm{YearVariance}_b} =& \ \Normal(\mathrm{YearVariance}_b; 0, 1) \\ % , \ y\in\{1,\dots,Y\} \\
\Pc{\mathrm{YearBoroughWeight}_{yb}}{\mathrm{YearMean}_y, \mathrm{YearVariance}_b} =& \ \Normal(\mathrm{YearBoroughWeight}_{yb}; \\ & \quad \; \ \mathrm{YearMean}_y * \frac{1}{1000}, \\ & \quad \; \ \exp(\mathrm{YearVariance}_b) * \frac{1}{1000}) \\ % ,  \ b\in\{1,\dots,B\}\\
% \mathrm{WeightVariance} &\sim \Normal(0,10^{-4}),  \\
\P{\mathrm{CompanyWeight}_{c}} =& \ \mathcal{N}(\mathrm{CompanyWeight}_{c}; 0, 1) \\ % , \ c\in\{1,\dots,C\} \\
\P{\mathrm{JourneyTypeWeight}_{j}} =& \ \mathcal{N}(\mathrm{JourneyTypeWeight}_{j}; 0, 1) \\ % , \ j\in\{1,\dots,J\} \\
\mathrm{logits}_{ybi} =& \ \mathrm{YearBoroughWeight}_{yb} * 1000 \\ & \quad \; \ \ + \mathrm{CompanyWeight}_{c_{ybi}}  \\ & \quad \; \ \ + 
\mathrm{JourneyTypeWeight}_{j_{ybi}}\\
\Pc{\mathrm{delay}_{ybi}}{\mathrm{logits}_{ybi}} =& \ \mathrm{Bernoulli}(\mathrm{delay}_{ybi};\mathrm{logits}_{ybi}), 
\end{split}
\end{equation}

and the approximate posterior distribution $Q$ is initialised as so:

\begin{equation}
\label{eq:bus model_Q reparam}
\begin{split}
\Q{\mathrm{GlobalVariance}} =& \ \Normal(\mathrm{GlobalVariance}; 0,1) \\
\Q{\mathrm{GlobalMean}} =& \ \Normal(\mathrm{GlobalMean}; 0,1)  \\
\Q{\mathrm{YearMean}_y} =& \ \Normal(\mathrm{YearMean}_y; 0,1) \\ % , \ y\in\{1,\dots,Y\} \\
\Q{\mathrm{YearVariance}_b} =& \ \Normal(\mathrm{YearVariance}_b; 0, 1) \\ % , \ y\in\{1,\dots,Y\} \\
\Q{\mathrm{YearBoroughWeight}_{yb}} =& \ \Normal(\mathrm{YearBoroughWeight}_{yb}; 0, \frac{1}{1000}) \\ % , \ b\in\{1,\dots,B\}\\
% \mathrm{WeightVariance} &\sim \Normal(0,10^{-4}),  \\
\Q{\mathrm{CompanyWeight}_{c}} =& \ \mathcal{N}(\mathrm{CompanyWeight}_{c}; 0, 1) \\ % , \ c\in\{1,\dots,C\} \\
\Q{\mathrm{JourneyTypeWeight}_{j}} =& \ \mathcal{N}(\mathrm{JourneyTypeWeight}_{j}; 0, 1) \\ % , \ j\in\{1,\dots,J\}. \\
\end{split}
\end{equation}

\subsection{Movielens dataset}
\label{app:movielens}
The MovieLens100K\footnote{Dataset and license: \url{http://files.grouplens.org/datasets/movielens/ml-latest-small-README.html}} \citep{harper2015movielens} dataset contains 100k ratings from 0 to 5 of $N{=}1682$ films from among $M{=}943$ users.
Following \citet{geffner2022variational}, we binarise the ratings into likes and dislikes by mapping user-ratings of \(\{0,1,2,3\}\) to \(0\) (dislikes) and user-ratings of \(\{4,5\}\) to \(1\) (likes), and assume binarised ratings of 0 for films which users have not previously rated.

The probabilistic graphical model of the full generative distribution is given in Figure~\ref{fig:movielens_pgm}. 
We specify films using the index \(n\) and specify users via the index \(m\).
Each film has a known feature vector, \(\mathbf{x}_n \in \{0,1\}^{18}\), indicating which of 18 genre tags (Action, Adventure, Animation, Children's, etc.) the film matches (each film may have multiple genre tags). 
Similarly, we model each user with a latent weight-vector, $\mathbf{z}_m \in \mathbb{R}^{18}$, describing whether or not they like any given feature.
We model the probability of a user $m$ liking a film $n$ as $\sigma(\mathbf{z}_m^\intercal \mathbf{x}_n))$, where $\sigma(\cdot)$ is the sigmoid function.
Thus the generative distribution may be written as
% \begin{align}
% \nonumber
% \m &\sim \Normal(\mathbf{0}_{18}, \I) \\
% \nonumber
% \boldsymbol{\psi} &\sim \Normal(\mathbf{0}_{18}, \I) \\
% \nonumber
% \mathbf{z}_m &\sim \Normal(\m,\exp(\boldsymbol{\psi}) \I), \ m=1,\dotsc,M \\
% \label{eq:movielens_vi_rws}
% \mathrm{Rating}_{mn} &\sim \operatorname{Bernoulli}(\sigma(\mathbf{z}_m^\intercal \mathbf{x}_n)), \ n=1,\dotsc,N
% \end{align}

\begin{equation}
\label{eq:movielens model}
\begin{split}
\P{\m} &= \Normal(\m; \mathbf{0}_{18}, \I), \\
\P{\boldsymbol{\psi}} &= \Normal(\boldsymbol{\psi}; \mathbf{0}_{18}, \I), \\
\Pc{\mathbf{z}_m}{\m, \boldsymbol{\psi}} &= \Normal(\mathbf{z}_m; \m,\exp(\boldsymbol{\psi}) \I) \\ % , \ m=1,\dotsc,M \\
\Pc{\mathrm{Rating}_{mn}}{\mathbf{z}_m, \mathbf{x}_n} &= \operatorname{Bernoulli}(\mathrm{Rating}_{mn}; \sigma(\mathbf{z}_m^\intercal \mathbf{x}_n))  \\ % , \ n=1,\dotsc,N
\end{split}
\end{equation}
where $m \in \{1, \ldots, M\}$ and $n \in \{1, \ldots, N\}$.
Note that we also have latent vectors for the global mean, $\m$, and log-variance, $\boldsymbol{\psi}$, of the weight vectors.

The factorised approximate posterior distribution $Q$ is initialised as:
\begin{equation}
\label{eq:movielens model_Q}
\begin{split}
\Q{\m} &= \Normal(\m; \mathbf{0}_{18}, \I), \\
\Q{\boldsymbol{\psi}} &= \Normal(\boldsymbol{\psi}; \mathbf{0}_{18}, \I), \\
\Q{\mathbf{z}_m} &= \Normal(\mathbf{z}_m; \mathbf{0}_{18}, \I) \\ % , \ m=1,\dotsc,M 
\end{split}
\end{equation}

To ensure high levels of uncertainty, we subsample the dataset to obtain a training set of \(N=5\) films and \(M=300\) users for our experiment. 
An equally sized but disjoint subset of users is held aside as a test set for calculation of the predictive log-likelihood.

\begin{figure*}[t]
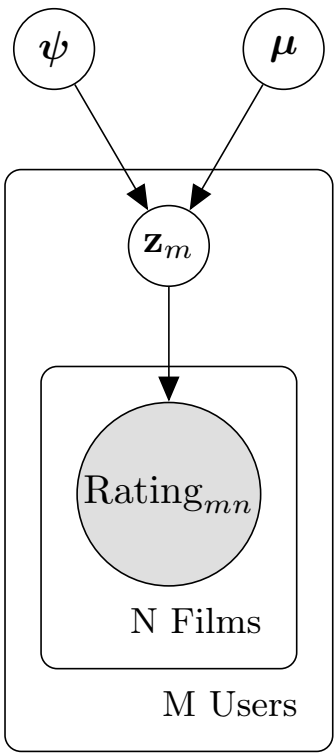

\centering
\resizebox{0.3\textwidth}{!}{%
  \tikz{
    % nodes
     \node[obs] (rating) {\(\mathrm{Rating}_{mn}\)};%
     \node[latent,above=of rating] (peruser) {\(\mathbf{z}_m\)}; %
     \node[latent, above=of peruser, xshift=-1cm] (psi) {\(\boldsymbol{\psi}\)};
    \node[latent, above=of peruser, xshift=1cm] (mu) {\(\boldsymbol{\mu}\)};
    % plate
     \plate [inner sep=.3cm] {plate2} {(rating)} {\(\mathrm{N}\) Films}; %
     \plate [inner sep=.3cm] {plate1} {(peruser)(plate2)} {\(\mathrm{M}\) Users}; %
    edges
     \edge {psi,mu} {peruser}
     \edge {peruser} {rating}}
     }
  \caption{Graphical model for the MovieLens dataset}
  \label{fig:movielens_pgm}
\end{figure*}

\subsubsection{Reparameterized Movielens model}

The reparameterized model is as follows,

\begin{equation}
\label{eq:movielens model reparam}
\begin{split}
\P{\m} &= \Normal(\m; \mathbf{0}_{18}, \I), \\
\P{\boldsymbol{\psi}} &= \Normal(\boldsymbol{\psi}; \mathbf{0}_{18}, \I), \\
\Pc{\mathbf{z}_m}{\m, \boldsymbol{\psi}} &= \Normal(\mathbf{z}_m; \frac{\m}{100},\frac{\exp(\boldsymbol{\psi})}{100} \I)  \\ % , \ m=1,\dotsc,M \\
\Pc{\mathrm{Rating}_{mn}}{\mathbf{z}_m, \mathbf{x}_n} &= \operatorname{Bernoulli}(\mathrm{Rating}_{mn}; \sigma(100*\mathbf{z}_m^\intercal \mathbf{x}_n)) \\ % , \ n=1,\dotsc,N
\end{split}
\end{equation}

The corresponding factorised approximate posterior distribution $Q$ is then initialised as
\begin{equation}
\label{eq:movielens model_Q reparam}
\begin{split}
\Q{\m} &= \Normal(\m; \mathbf{0}_{18}, \I), \\
\Q{\boldsymbol{\psi}} &= \Normal(\boldsymbol{\psi}; \mathbf{0}_{18}, \I), \\
\Q{\mathbf{z}_m} &= \Normal(\mathbf{z}_m; \mathbf{0}_{18}, \frac{1}{100}\I) \\ % , \ m=1,\dotsc,M 
\end{split}
\end{equation}

\subsection{Occupancy dataset}
\label{app:occupancy}
The aim of occupancy modelling is to infer the presence of an animal at a given observation site from repeated samples, accounting for the possibilities of non-detection or false-detection \citep{doser2022spoccupancy}. 
In this experiment we fit a modified multi-species occupancy model to the North American Breeding Bird Survey data\footnote{Dataset: \url{https://www.sciencebase.gov/catalog/item/625f151ed34e85fa62b7f926}, licensing information is included with the dataset.}, which records over 700 species of bird across the contiguous United States and the 13 provinces and territories of Canada.
The readings are collected along thousands of randomly selected road-side routes, taken at half mile intervals along the 24.5 mile routes for a total of 50 readings per route.

The survey is taken once per year, during peak breeding season, usually in June. 
The dataset we use covers the years 1966-2021, excluding 2020.
We obtain \(\mathrm{R}=5\) repeated samples from each route by taking the unit step function of the sum of every 10 readings (for a given route, year and species). 
From the main dataset, we obtain a training set by subsampling a set of \(J=12\) bird species, \(M=6\) years and \(I=200\) routes.
A distinct test set for predictive log-likelihood calculation is obtained using the same values of \(J\) and \(M\), but a different set of \(I=100\) routes. 
Our model takes into consideration two sets of covariates: \(\mathrm{Weather}_{jmi}\) gives the temperature at site \(i\) on year \(m\) (replicated for each bird species); and \(\mathrm{Quality}_{jmi}\) indicates whether a particular series of readings at site \(j\) on year \(m\) followed all the recommended guidelines for recording birds.

We use a Bernoulli random variable to model the recording of a particular species \(j\) of bird in a particular repeated measurement (given the route \(i\) and year \(m\)) using logits which are calculated as the product of the quality covariate \(\mathrm{Quality}_{jmi}\), an inferred quality weight \(\mathrm{QualityWeight}_{jmi}\), and another inferred latent variable \(z_{jmi}\) which indicates the true presence of bird species \(j\) along route \(i\) in year \(m\), as well as including some probability of a false-positive bird sighting.
This variable \(z_{jmi}\) is also modelled by a Bernoulli distribution, with logits calculated as the product of the weather covariate \(\mathrm{Weather}_{jmi}\), an inferred weather weight \(\mathrm{WeatherWeight}_{jmi}\), and the latent variable \(\mathrm{BirdYearMean}_{jm}\) which represents the mean frequency of a specific species in a given year.
Covariate weights, \(\mathrm{WeatherWeight}_{jmi}\) and \(\mathrm{QualityWeight}_{jmi}\), are modelled via separate Gaussian distributions, each of which have standard Gaussian hyperpriors for their mean and log-variance.
The \(\mathrm{BirdYearMean}_{jm}\) variable also has an hierarchical prior: it comes from a Gaussian distribution with unit variance whose mean is the variable \(\mathrm{BirdMean_j}\), representing the mean frequency of species \(j\) regardless of year or route, which in turn has its mean and log-variance modelled via standard Gaussian hyperpriors.
% We model the recording of a bird along a particular route in a particular repeated measurement as arising from a Bernoulli distribution, with logits given by the product of the weighted quality covariate and an inferred latent variable \(z_{jmi}\) that indicates the true presence of a particular bird species \(j\) along route \(i\) in year \(m\), as well as including some probability of a false-positive bird sighting. 
% We model this latent variable \(z_{mji}\) as also arising from a Bernoulli distribution, with logits given by weighted Weather covariates multiplied by a variable \(\mathrm{BirdYearMean}_{jm}\) representing the mean frequency of a specific species in a given year. 
% The prior for the Weather and Quality weights, \(\mathrm{WeatherWeight}_{jmi}\) and \(\mathrm{QualityWeight}_{jmi}\) respectively, are normal with mean and log-variance sampled from standard normal priors. 
% The variable \(\mathrm{BirdYearMean}_{jm}\) also has an hierarchical prior: first we sample a \(\mathrm{BirdMean_j}\) whose mean and log-variance are sampled from standard normal priors, then for each year we sample a \(\mathrm{BirdYearMean}_{jm}\) for each year and bird species from a normal distribution with mean  \(\mathrm{BirdMean_j}\) and unit variance. 
The full model may be written as

% \begin{equation}
% \label{eq:occupancy model}
% \begin{split}
% \mu_\mathrm{BirdMean} &\sim \mathcal{N}(0,1), \\
% \sigma_\mathrm{BirdMean} &\sim \mathcal{N}(0,1), \\
% \mu_\mathrm{QualityWeight} &\sim \mathcal{N}(0,1), \\
% \sigma_\mathrm{QualityWeight} &\sim \mathcal{N}(0,1), \\
% \mu_\mathrm{WeatherWeight} &\sim \mathcal{N}(0,1), \\
% \sigma_\mathrm{WeatherWeight} &\sim \mathcal{N}(0,1), \\
% \mathrm{QualityWeight}_{j} &\sim \mathcal{N}(\mu_\mathrm{QualityWeight}, \exp(\sigma_\mathrm{QualityWeight})), \ \ j\in\{1,\dots,J\} \\
% \mathrm{WeatherWeight}_{j} &\sim \mathcal{N}(\mu_\mathrm{WeatherWeight}, \exp(\sigma_\mathrm{WeatherWeight})), \ \ j\in\{1,\dots,J\} \\
% \mathrm{BirdMean}_{j} &\sim \mathcal{N}(\mu_\mathrm{BirdMean},\exp(\sigma_\mathrm{BirdMean})), \ \ j\in\{1,\dots,J\} \\
% \mathrm{BirdYearMean}_{jm} &\sim \mathcal{N}(\mathrm{BirdMean}_{jm},1), \ \ m\in\{1,\dots,M\} \\
% z_{jmi} &\sim \mathrm{Bernoulli}(\mathrm{logits}= \mathrm{BirdYearMean}_{jm}*\beta_{j}*\mathrm{Weather}_{jmi} ) , \ \ i\in\{1,\dots,I\} \\
% y_{jmir} &\sim \mathrm{Bernoulli}(\mathrm{logits}= z_{jmi} * \mathrm{QualityWeight}_j  *\mathrm{Quality}_{jmir} + (1-z_{jmi})*(-10) ) , \ \ r\in\{1,\dots,R\} 
% \end{split}
% \end{equation}

\begin{equation}
\label{eq:occupancy model}
\begin{split}
\P{\mu_\mathrm{BirdMean}} =& \  \mathcal{N}(\mu_\mathrm{BirdMean}; 0,1), \\
\P{\sigma_\mathrm{BirdMean}} =& \  \mathcal{N}(\sigma_\mathrm{BirdMean}; 0,1), \\
\P{\mu_\mathrm{QualityWeight}} =& \  \mathcal{N}(\mu_\mathrm{QualityWeight}; 0,1), \\
\P{\sigma_\mathrm{QualityWeight}} =& \  \mathcal{N}(\sigma_\mathrm{QualityWeight}; 0,1), \\
\P{\mu_\mathrm{WeatherWeight}} =& \  \mathcal{N}(\mu_\mathrm{WeatherWeight}; 0,1), \\
\P{\sigma_\mathrm{WeatherWeight}} =& \  \mathcal{N}(\sigma_\mathrm{WeatherWeight}; 0,1), \\
\Pc{\mathrm{QualityWeight}_{j}}{\mu_\mathrm{QualityWeight}, \sigma_\mathrm{QualityWeight}} =& \  \mathcal{N}(\mathrm{QualityWeight}_{j}; \mu_\mathrm{QualityWeight},  \\ & \qquad \exp(\sigma_\mathrm{QualityWeight})) \\ % , \ j\in\{1,\dots,J\} \\
\Pc{\mathrm{WeatherWeight}_{j}}{\mu_\mathrm{WeatherWeight}, \sigma_\mathrm{WeatherWeight}} =& \  \mathcal{N}(\mathrm{WeatherWeight}_{j}; \mu_\mathrm{WeatherWeight},  \\ & \qquad \exp(\sigma_\mathrm{WeatherWeight})) \\ % , \ j\in\{1,\dots,J\} \\
\Pc{\mathrm{BirdMean}_{j}}{\mu_\mathrm{BirdMean}, \sigma_\mathrm{BirdMean}} =& \  \mathcal{N}(\mathrm{BirdMean}_{j}; \mu_\mathrm{BirdMean}, \\ & \qquad \exp(\sigma_\mathrm{BirdMean})) \\ % ,  \\ & \quad \; \ \ j\in\{1,\dots,J\} \\
\Pc{\mathrm{BirdYearMean}_{jm}}{\mathrm{BirdMean}_{jm}} =& \  \mathcal{N}(\mathrm{BirdYearMean}_{jm}; \mathrm{BirdMean}_{jm},1) \\ % ,  \\ & \quad \; \ \ m\in\{1,\dots,M\} \\
\mathrm{logits}^z_{jmi} =& \ \mathrm{BirdYearMean}_{jm}*\mathrm{WeatherWeight}_{j}  \\ & \qquad *\mathrm{Weather}_{jmi}  \\ % , \ i\in\{1,\dots,I\} \\
\Pc{z_{jmi}}{\mathrm{logits^z_{jmi}}} =& \  \mathrm{Bernoulli}(z_{jmi}; \mathrm{logits}= \mathrm{logits^z_{jmi}} )  \\ % ,   \\ & \quad \; \ \ i\in\{1,\dots,I\} \\
% \Pc{z_{jmi}}{\mathrm{BirdYearMean}_{jm}, \mathrm{WeatherWeight}_{j}, \mathrm{Weather}_{jmi}} =& \  \mathrm{Bernoulli}(z_{jmi}; \mathrm{logits}= \mathrm{BirdYearMean}_{jm}*\mathrm{WeatherWeight}_{j}*\mathrm{Weather}_{jmi} ) , \\ & \ i\in\{1,\dots,I\} \\
\mathrm{logits}^y_{jmir} =& \ z_{jmi} * \mathrm{QualityWeight}_j  *\mathrm{Quality}_{jmir}  \\ & \qquad + (1-z_{jmi})*(-10)  \\ % , \ r\in\{1,\dots,R\}  \\
\Pc{y_{jmir}}{\mathrm{logits^y_{jmir}}} =& \  \mathrm{Bernoulli}(y_{jmir}; \mathrm{logits}= \mathrm{logits^y_{jmir}} )  \\ % ,  \\ & \quad \; \ \ r\in\{1,\dots,R\}\\
% \Pc{y_{jmir}}{z_{jmi}, \mathrm{QualityWeight}_j, \mathrm{Quality}_{jmir}} =& \ \mathrm{Bernoulli}(y_{jmir}; \mathrm{logits}= z_{jmi} * \mathrm{QualityWeight}_j  *\mathrm{Quality}_{jmir} + (1-z_{jmi})*(-10) ) , \\ & \ r\in\{1,\dots,R\} 
\end{split}
\end{equation}

where $j \in \{1, \ldots, J\}, m \in \{1, \ldots, M\}, i \in \{1, \ldots, I\}$ and $r \in \{1, \ldots, R\}$.
The corresponding graphical model is presented in Figure~\ref{fig:occ_gm}.

The factorised approximate posterior distribution $Q$ is initialised as:

\begin{equation}
\label{eq:occupancy model_Q}
\begin{split}
\Q{\mu_\mathrm{BirdMean}} =& \  \mathcal{N}(\mu_\mathrm{BirdMean}; 0,1), \\
\Q{\sigma_\mathrm{BirdMean}} =& \  \mathcal{N}(\sigma_\mathrm{BirdMean}; 0,1), \\
\Q{\mu_\mathrm{QualityWeight}} =& \  \mathcal{N}(\mu_\mathrm{QualityWeight}; 0,1), \\
\Q{\sigma_\mathrm{QualityWeight}} =& \  \mathcal{N}(\sigma_\mathrm{QualityWeight}; 0,1), \\
\Q{\mu_\mathrm{WeatherWeight}} =& \  \mathcal{N}(\mu_\mathrm{WeatherWeight}; 0,1), \\
\Q{\sigma_\mathrm{WeatherWeight}} =& \  \mathcal{N}(\sigma_\mathrm{WeatherWeight}; 0,1), \\
\Q{\mathrm{QualityWeight}_{j}} =& \  \mathcal{N}(\mathrm{QualityWeight}_{j}; 0,1) \\ % , \ j\in\{1,\dots,J\} \\
\Q{\mathrm{WeatherWeight}_{j}} =& \  \mathcal{N}(\mathrm{WeatherWeight}_{j}; 0,1) \\ % , \ j\in\{1,\dots,J\} \\
\Q{\mathrm{BirdMean}_{j}} =& \  \mathcal{N}(\mathrm{BirdMean}_{j}; 0,1) \\ % , \ j\in\{1,\dots,J\} \\
\Q{\mathrm{BirdYearMean}_{jm}} =& \  \mathcal{N}(\mathrm{BirdYearMean}_{jm};  0,1) \\ % , \ m\in\{1,\dots,M\} \\
\end{split}
\end{equation}

\subsubsection{Reparameterized occupancy model}

The reparameterized occupancy model is as follows:

\begin{equation}
\label{eq:occupancy model reparam}
\begin{split}
\P{\mu_\mathrm{BirdMean}} =& \  \mathcal{N}(\mu_\mathrm{BirdMean}; 0,1), \\
\P{\sigma_\mathrm{BirdMean}} =& \  \mathcal{N}(\sigma_\mathrm{BirdMean}; 0,1), \\
\P{\mu_\mathrm{QualityWeight}} =& \  \mathcal{N}(\mu_\mathrm{QualityWeight}; 0,1), \\
\P{\sigma_\mathrm{QualityWeight}} =& \  \mathcal{N}(\sigma_\mathrm{QualityWeight}; 0,1), \\
\P{\mu_\mathrm{WeatherWeight}} =& \  \mathcal{N}(\mu_\mathrm{WeatherWeight}; 0,1), \\
\P{\sigma_\mathrm{WeatherWeight}} =& \  \mathcal{N}(\sigma_\mathrm{WeatherWeight}; 0,1), \\
\Pc{\mathrm{QualityWeight}_{j}}{\mu_\mathrm{QualityWeight}, \sigma_\mathrm{QualityWeight}} =& \  \mathcal{N}(\mathrm{QualityWeight}_{j}; \mu_\mathrm{QualityWeight},  \\ & \qquad \exp(\sigma_\mathrm{QualityWeight})) \\ % , \ j\in\{1,\dots,J\} \\
\Pc{\mathrm{WeatherWeight}_{j}}{\mu_\mathrm{WeatherWeight}, \sigma_\mathrm{WeatherWeight}} =& \  \mathcal{N}(\mathrm{WeatherWeight}_{j}; \mu_\mathrm{WeatherWeight},  \\ & \qquad \exp(\sigma_\mathrm{WeatherWeight})) \\ % , \ j\in\{1,\dots,J\} \\
\Pc{\mathrm{BirdMean}_{j}}{\mu_\mathrm{BirdMean}, \sigma_\mathrm{BirdMean}} =& \  \mathcal{N}(\mathrm{BirdMean}_{j}; \mu_\mathrm{BirdMean}, \\ & \qquad \exp(\sigma_\mathrm{BirdMean})) \\ % ,  \\ & \qquad j\in\{1,\dots,J\} \\
\Pc{\mathrm{BirdYearMean}_{jm}}{\mathrm{BirdMean}_{jm}} =& \  \mathcal{N}(\mathrm{BirdYearMean}_{jm}; \\ & \qquad
 \frac{1}{1000}\mathrm{BirdMean}_{jm},\frac{1}{1000}) \\ % ,  \\ & \qquad m\in\{1,\dots,M\} \\
\mathrm{logits}^z_{jmi} =& \ 1000*\mathrm{BirdYearMean}_{jm}  \\ & \qquad  *\mathrm{WeatherWeight}_{j} *\mathrm{Weather}_{jmi} \\ %  , \ i\in\{1,\dots,I\} \\
\Pc{z_{jmi}}{\mathrm{logits^z_{jmi}}} =& \  \mathrm{Bernoulli}(z_{jmi}; \mathrm{logits}= \mathrm{logits^z_{jmi}} )  \\ % ,   \\ & \qquad \ i\in\{1,\dots,I\} \\
% \Pc{z_{jmi}}{\mathrm{BirdYearMean}_{jm}, \mathrm{WeatherWeight}_{j}, \mathrm{Weather}_{jmi}} =& \  \mathrm{Bernoulli}(z_{jmi}; \mathrm{logits}= \mathrm{BirdYearMean}_{jm}*\mathrm{WeatherWeight}_{j}*\mathrm{Weather}_{jmi} ) , \\ & \ i\in\{1,\dots,I\} \\
\mathrm{logits}^y_{jmir} =& \ z_{jmi} * \mathrm{QualityWeight}_j  *\mathrm{Quality}_{jmir}  \\ & \qquad + (1-z_{jmi})*(-10)  \\ % ,  \ r\in\{1,\dots,R\}  \\
\Pc{y_{jmir}}{\mathrm{logits^y_{jmir}}} =& \  \mathrm{Bernoulli}(y_{jmir}; \mathrm{logits}= \mathrm{logits^y_{jmir}} ) \\ %  ,  \\ & \qquad \ r\in\{1,\dots,R\} %\\
% \Pc{y_{jmir}}{z_{jmi}, \mathrm{QualityWeight}_j, \mathrm{Quality}_{jmir}} =& \ \mathrm{Bernoulli}(y_{jmir}; \mathrm{logits}= z_{jmi} * \mathrm{QualityWeight}_j  *\mathrm{Quality}_{jmir} + (1-z_{jmi})*(-10) ) , \\ & \ r\in\{1,\dots,R\} 
\end{split}
\end{equation}

% and the graphical model is presented in Fig.~\ref{fig:occ_gm}.

The factorised approximate posterior distribution $Q$ is initialised as:

\begin{equation}
\label{eq:occupancy model_Q reparam}
\begin{split}
\Q{\mu_\mathrm{BirdMean}} =& \  \mathcal{N}(\mu_\mathrm{BirdMean}; 0,1), \\
\Q{\sigma_\mathrm{BirdMean}} =& \  \mathcal{N}(\sigma_\mathrm{BirdMean}; 0,1), \\
\Q{\mu_\mathrm{QualityWeight}} =& \  \mathcal{N}(\mu_\mathrm{QualityWeight}; 0,1), \\
\Q{\sigma_\mathrm{QualityWeight}} =& \  \mathcal{N}(\sigma_\mathrm{QualityWeight}; 0,1), \\
\Q{\mu_\mathrm{WeatherWeight}} =& \  \mathcal{N}(\mu_\mathrm{WeatherWeight}; 0,1), \\
\Q{\sigma_\mathrm{WeatherWeight}} =& \  \mathcal{N}(\sigma_\mathrm{WeatherWeight}; 0,1), \\
\Q{\mathrm{QualityWeight}_{j}} =& \  \mathcal{N}(\mathrm{QualityWeight}_{j}; 0,1) \\ % , \ j\in\{1,\dots,J\} \\
\Q{\mathrm{WeatherWeight}_{j}} =& \  \mathcal{N}(\mathrm{WeatherWeight}_{j}; 0,1) \\ % , \ j\in\{1,\dots,J\} \\
\Q{\mathrm{BirdMean}_{j}} =& \  \mathcal{N}(\mathrm{BirdMean}_{j}; 0,1) \\ % , \ j\in\{1,\dots,J\} \\
\Q{\mathrm{BirdYearMean}_{jm}} =& \  \mathcal{N}(\mathrm{BirdYearMean}_{jm};  0,\frac{1}{1000}) \\ % , \ m\in\{1,\dots,M\} \\
\end{split}
\end{equation}

\begin{figure}[!htb]
\begin{center}
\resizebox{\textwidth}{!}{%
 \begin{tikzpicture}
   % nodes

   %R
   \node[obs] (y) {\(\mathrm{y}_{jmir}\)};%
   %I 
   \node[latent, left=of y] (z) {\(z_{jmi}\)};   

   %M
   \node[latent, above=of z, xshift=-1cm] (birdyearmean) {\(\mathrm{BirdYearMean}_{jm}\)}; %
   %J
   \node[latent, above=of birdyearmean, xshift=-1cm] (birdmean) {\(\mathrm{BirdMean}_j\)}; %

   \node[latent, right=of birdmean, xshift=1cm]  (beta) {\(\mathrm{WeatherWeight}_{j}\)}; 
   \node[latent, right=of beta]  (alpha) {\(\mathrm{QualityWeight}_{j}\)};

   %global
    \node[latent, above=of beta, yshift=0.75cm]  (betalogvar) {\(\sigma_\mathrm{WeatherWeight}\)}; 
   \node[latent, left=of betalogvar]  (betamean) {\(\mu_\mathrm{WeatherWeight}\)}; 

    \node[latent, left=of birdmean]  (birdmeanlogvar) {\(\sigma_\mathrm{BirdMean}\)}; 
   \node[latent, above=of birdmeanlogvar, yshift=-0.5cm]  (birdmeanmean) {\(\mu_\mathrm{BirdMean}\)};

    \node[latent, right=of alpha]  (alphalogvar) {\(\sigma_\mathrm{QualityWeight}\)}; 
    \node[latent, above=of alphalogvar]  (alphamean) {\(\mu_\mathrm{QualityWeight}\)}; 
   % plate
    \plate [] {platerepeat} {(y)} {\(\mathrm{R}\) Repeats}; %
    \plate [] {plateroutes} {(z)(platerepeat)} {\(\mathrm{I}\) Routes}; 
    \plate [] {plateyears} {(birdyearmean)(plateroutes)} {\(\mathrm{M}\) Years}; 
    \plate [] {platebirds} {(birdmean)(alpha)(beta)(plateyears)} {\(\mathrm{J}\) Bird Species}; 
   %edges
    \edge {z} {y}
    \edge {alpha} {y}
    \edge {birdyearmean} {z}
    \edge {birdmean} {birdyearmean}
    \edge {beta} {z}
    \edge {alpha} {y}
    \edge {birdmeanmean, birdmeanlogvar} {birdmean}
    \edge {alphamean, alphalogvar} {alpha}
    \edge {betamean, betalogvar} {beta}
    \end{tikzpicture} }
\caption{Graphical model for the Bird Occupancy dataset}
\label{fig:occ_gm}
\end{center}
\end{figure}
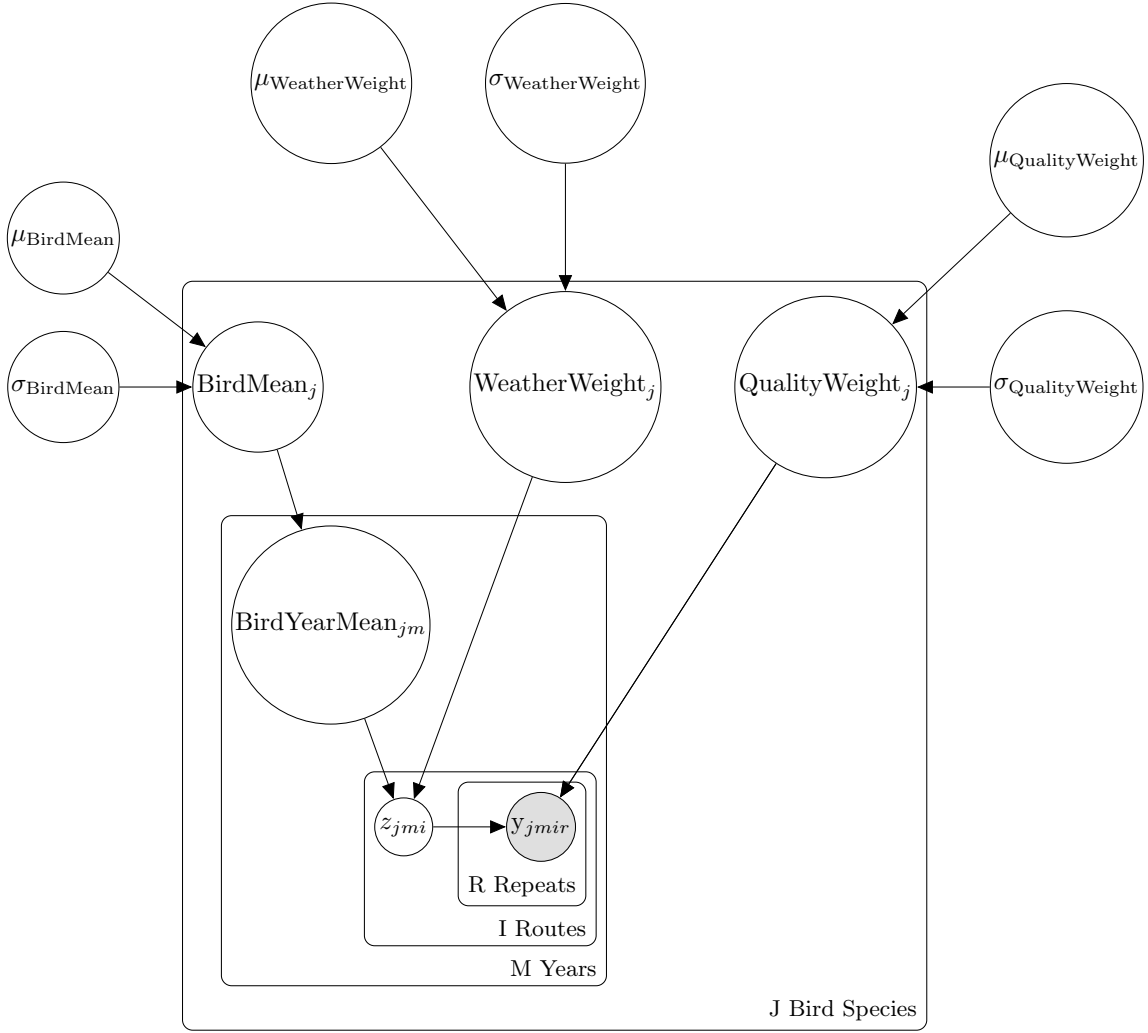

\subsection{Radon model}

The radon dataset \cite{price1996bayesian} consists of 12,777 home radon measurements in 9 states\footnote{Dataset: \url{http://www.stat.columbia.edu/~gelman/arm/examples/radon/} from \citet{gelman2006data}.} 
Each reading is accompanied with information about which floor the reading was taken on, and county level soil uranium readings in parts per million. 
We take a subset of $S=4$ states and $R=300$ readings. 
We split this dataset in half, taking the results from 150 readings (per state) as our training set whilst the other 150 readings (per state) form our test set for the evaluation of predictive log-likelihood.
The radon measurements are in log trillionth of a Curie per litre.

We model these readings as draws from a Gaussian distribution, with mean given by the sum of a per state mean $\mathrm{StateMean}_{s}$ and weighted basement and log uranium readings, and a per state variance $\mathrm{StateVariance}_{s}$. The weights and $\mathrm{StateVariance}_{s}$ are drawn from independent Normal distributions also in a per state fashion. The prior for $\mathrm{StateMean}_{sc}$ is given by a Gaussian distribution with a global mean $\mathrm{GlobalMean}$ and variance $\mathrm{GlobalVariance}$, both of which are drawn from Normal distributions.
A graphical model representation can be found in Figure~\ref{fig:radon_gm}.
This can be written as

\begin{equation}
\label{eq:radon model}
\begin{split}
\P{\mathrm{GlobalMean}} =& \  \mathcal{N}(\mathrm{GlobalMean}; 0,1), \\
\P{\mathrm{GlobalVariance}} =& \  \mathcal{N}(\mathrm{GlobalVariance}; 0,1), \\
\Pc{\mathrm{StateMean}_s}{\mathrm{GlobalMean},\mathrm{GlobalVariance}} =& \ \mathcal{N}(\mathrm{StateMean}_s; \mathrm{GlobalMean}, \\ & \qquad  \exp(\mathrm{GlobalVariance})) \\ % , \ s \in \{1,\dots,S\} \\
\P{\mathrm{StateVariance}_s} =& \ \mathcal{N}(\mathrm{StateVariance}_s; 0,1) \\ % , \ s \in \{1,\dots,S\} \\ 
\P{\mathrm{UraniumWeight}_{s}} =& \ \mathcal{N}(\mathrm{UraniumWeight}_{sc}; 0,1) \\ % , \ c \in \{1,\dots,C\}, \ s \in \{1,\dots,S\} \\
\P{\mathrm{BasementWeight}_{s}} =& \ \mathcal{N}(\mathrm{BasementWeight}_{s}; 0,1) \\ % , \ c \in \{1,\dots,C\}, \ s \in \{1,\dots,S\} \\
\mathrm{RadonMean}_{sr} =& \ \mathrm{StateMean}_{s}  \\ & \qquad + \mathrm{UraniumWeight}_{s} * \mathrm{Uranium}_{s}  \\ & \qquad  + \mathrm{BasementWeight}_{s} * \mathrm{Basement}_{sr} \\
\Pc{\mathrm{Radon}_{sr}}{\mathrm{RadonMean}_{sr}, \mathrm{CountryVariance}_{s}} =& \ \mathcal{N}(\mathrm{Radon}_{sr}; \mathrm{RadonMean}_{sr},  \\ & \qquad \exp(\mathrm{StateVariance}_{s})) \\ % , \ r \in \{1,\dots,R\}
\\
\end{split}
\end{equation}

where $s \in \{1, \ldots, S\}$ and $r \in \{1, \ldots, R\}$.
We initialise our approximate posterior distribution $Q$ as follows:

\begin{equation}
\label{eq:radon model_Q}
\begin{split}
\Q{\mathrm{GlobalMean}} =& \  \mathcal{N}(\mathrm{GlobalMean}; 0,1), \\
\Q{\mathrm{GlobalVariance}} =& \  \mathcal{N}(\mathrm{GlobalVariance}; 0,1), \\
\Q{\mathrm{StateMean}_s} =& \ \mathcal{N}(\mathrm{StateMean}_s; 0, 1) \\ % , \ s \in \{1,\dots,S\} \\
\Q{\mathrm{StateVariance}_s} =& \ \mathcal{N}(\mathrm{StateVariance}_s; 0,1) \\ % , \ s \in \{1,\dots,S\} \\ 
\Q{\mathrm{UraniumWeight}_{s}} =& \ \mathcal{N}(\mathrm{UraniumWeight}_{s}; 0,1) \\ % , \ c \in \{1,\dots,C\}, \ s \in \{1,\dots,S\} \\
\Q{\mathrm{BasementWeight}_{s}} =& \ \mathcal{N}(\mathrm{BasementWeight}_{s}; 0,1) \\ % , \ c \in \{1,\dots,C\}, \ s \in \{1,\dots,S\} \\
\\
\end{split}
\end{equation}

\begin{figure}[!htb]
\begin{center}
\resizebox{0.95\textwidth}{!}{%
 \begin{tikzpicture}
   % nodes

   %R
   \node[obs] (radon) {\(\mathrm{Radon}_{sr}\)};%
   %C
   \node[latent, right=of radon] (basementweight) {\(\mathrm{BasementWeight}_{s}\)};   
   
   \node[latent, left=of radon, xshift=-0.5cm] (uraniumweight) {\(\mathrm{UraniumWeight}_{s}\)}; %

   %S
   \node[latent, above=of radon, xshift=0.5cm] (statemean) {\(\mathrm{StateMean}_{s}\)}; %

   \node[latent, right=of statemean, xshift=0.5cm] (statevariance) {\(\mathrm{StateVariance}_{s}\)}; %

   %global
    \node[latent, above=of statemean]  (globalmean) {\(\mathrm{GlobalMean}\)}; 
   \node[latent, above=of statevariance]  (globalvariance) {\(\mathrm{GlobalVariance}\)};

   % plate
    \plate [] {platereadings} {(radon)} {\(\mathrm{R}\) Readings}; %
    \plate [] {platestates} {(basementweight)(uraniumweight)(statemean)(statevariance)(platereadings)} {\(\mathrm{S}\) States}; 
   %edges
    \edge {statevariance, uraniumweight, basementweight, statemean} {radon}
    \edge {globalmean, globalvariance} {statemean}

    \end{tikzpicture} }
\caption{Graphical model for the Radon model}
\label{fig:radon_gm}
\end{center}
\end{figure}
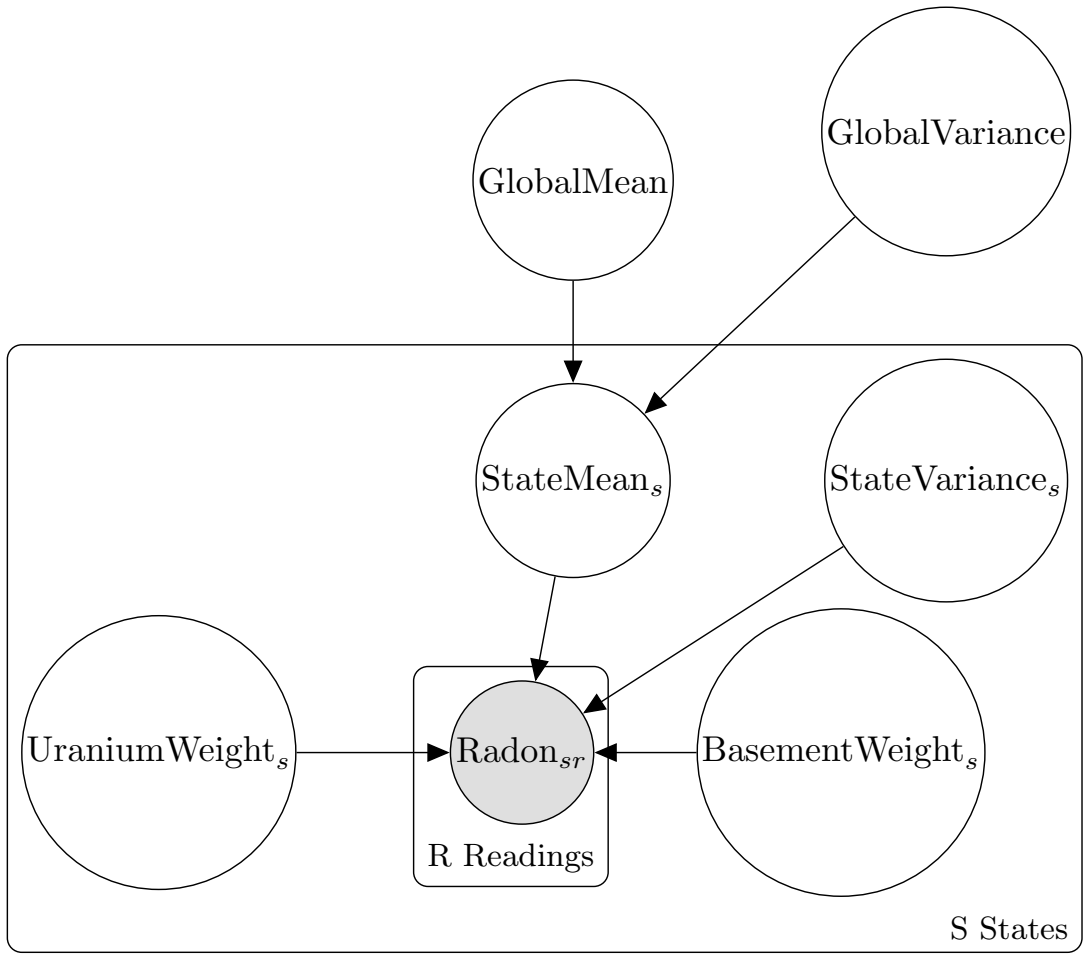

\subsubsection{Reparameterized radon model}

The reparameterized radon model is as so:

\begin{equation}
\label{eq:radon model reparam}
\begin{split}
\P{\mathrm{GlobalMean}} =& \  \mathcal{N}(\mathrm{GlobalMean}; 0,1), \\
\P{\mathrm{GlobalVariance}} =& \  \mathcal{N}(\mathrm{GlobalVariance}; 0,1), \\
\Pc{\mathrm{StateMean}_s}{\mathrm{GlobalMean},\mathrm{GlobalVariance}} =& \ \mathcal{N}(\mathrm{StateMean}_s; \frac{\mathrm{GlobalMean}}{1000}, \\ & \qquad  \frac{\exp(\mathrm{GlobalVariance})}{1000}) \\ % , \ s \in \{1,\dots,S\} \\
\P{\mathrm{StateVariance}_s} =& \ \mathcal{N}(\mathrm{StateVariance}_s; 0,1) \\ % , \ s \in \{1,\dots,S\} \\ 
\P{\mathrm{UraniumWeight}_{s}} =& \ \mathcal{N}(\mathrm{UraniumWeight}_{sc}; 0,1) \\ % , \ c \in \{1,\dots,C\}, \ s \in \{1,\dots,S\} \\
\P{\mathrm{BasementWeight}_{s}} =& \ \mathcal{N}(\mathrm{BasementWeight}_{s}; 0,1) \\ % , \ c \in \{1,\dots,C\}, \ s \in \{1,\dots,S\} \\
\mathrm{RadonMean}_{sr} =& \ 1000*\mathrm{StateMean}_{s}  \\ & \qquad + \mathrm{UraniumWeight}_{s} * \mathrm{Uranium}_{s}  \\ & \qquad  + \mathrm{BasementWeight}_{s} * \mathrm{Basement}_{sr} \\
\Pc{\mathrm{Radon}_{sr}}{\mathrm{RadonMean}_{sr}, \mathrm{CountryVariance}_{s}} =& \ \mathcal{N}(\mathrm{Radon}_{sr}; \mathrm{RadonMean}_{sr},  \\ & \qquad \exp(\mathrm{StateVariance}_{s})) \\ % , \ r \in \{1,\dots,R\}
\\
\end{split}
\end{equation}

We initialise our approximate posterior distribution $Q$ as follows:

\begin{equation}
\label{eq:radon model_Q reparam}
\begin{split}
\Q{\mathrm{GlobalMean}} =& \  \mathcal{N}(\mathrm{GlobalMean}; 0,1), \\
\Q{\mathrm{GlobalVariance}} =& \  \mathcal{N}(\mathrm{GlobalVariance}; 0,1), \\
\Q{\mathrm{StateMean}_s} =& \ \mathcal{N}(\mathrm{StateMean}_s; 0, \frac{1}{1000}) \\ % , \ s \in \{1,\dots,S\} \\
\Q{\mathrm{StateVariance}_s} =& \ \mathcal{N}(\mathrm{StateVariance}_s; 0,1) \\ % , \ s \in \{1,\dots,S\} \\ 
\Q{\mathrm{UraniumWeight}_{s}} =& \ \mathcal{N}(\mathrm{UraniumWeight}_{s}; 0,1) \\ % , \ c \in \{1,\dots,C\}, \ s \in \{1,\dots,S\} \\
\Q{\mathrm{BasementWeight}_{s}} =& \ \mathcal{N}(\mathrm{BasementWeight}_{s}; 0,1) \\ % , \ c \in \{1,\dots,C\}, \ s \in \{1,\dots,S\} \\
\\
\end{split}
\end{equation}

\subsection{Covid model}

The Covid dataset we use consists of $D = 137$ daily Covid-19 infection measurements from $R=92$ regions around the world \cite{leech2022mask}.\footnote{Dataset and license: \url{https://github.com/g-leech/masks_v_mandates}} 
In addition the dataset contains region and day level information about average mobility, mask wearing, and other non-pharmaceutical interventions (NPIs). 
We model the observed Covid-19 infection numbers as a Negative Binomial random variable. The total count parameter of the Negative Binomial distribtion is sampled from a region level log-Normal distribution and the logits are given by a autoregressive Gaussian timeseries where, after the initial timestep which is sampled from a region level Gaussian distribution, each timestep is sampled from a Gaussian with a mean given by the sum of the previous timestep, a global mean term and a term depending on weighted average mobility, mask wearing and NPIs. 
The weights themselves are sampled from Gaussian distributions. 
The variance of this Gaussian distribution is sampled from a region level Gaussian, for which the mean and variance are sampled from global level Gaussian distributions. 

We use the first $D=109$ days of data as our training set and leave the remaining 28 days to form the test set on which we'll calculate predictive log-likelihood.

A graphical model representation of the model can be see in Figure~\ref{fig:covid_gm}. 
The model can be written as:

\begin{equation}
\label{eq:covid model}
\begin{split}
\P{\boldsymbol{\alpha_\mathrm{NPIs}}} =& \  \mathcal{N}(\alpha_\mathrm{NPIs}; \boldsymbol{0}_9,\boldsymbol{I}_9), \\
\P{\alpha_\mathrm{Wearing}} =& \  \mathcal{N}(\alpha_\mathrm{Wearing}; 0,0.4), \\
\P{\alpha_\mathrm{Mobility}} =& \ \mathcal{N}(\alpha_\mathrm{Mobility}; 1.704, 0.44)) \\
\P{\mathrm{GlobalMean}} =& \ \mathcal{N}(\mathrm{GlobalMean}; 1.07,0.6) \\ 
\P{\mathrm{InitialSizeMean}} =& \ \mathcal{N}(\mathrm{InitialSizeMean}; \log(1000),0.5) \\ 
\P{\mathrm{InfectedNoiseMean}} =& \ \mathcal{N}(\mathrm{InfectedNoiseMean}; \log(0.01),0.25) \\ 
\Pc{\mathrm{MeanInfected}_{r,1}}{\mathrm{InitialSizeMean}} =& \ \mathcal{N}(\mathrm{MeanInfected}_{r,1};  \\ & \qquad  \mathrm{InitialSizeMean},0.5) \\ % , \ r \in \{1,\dots,R\}  \\
\Pc{\mathrm{InfectedNoise}_r}{\mathrm{InfectedNoiseMean}} =& \ \mathcal{N}(\mathrm{InfectedNoise}_r;  \\ & \qquad \mathrm{InfectedNoiseMean},0.25) \\ % , \ r \in \{1,\dots,R\} \\
\P{\psi_r} =& \ \mathcal{N}(\psi_r; 0,1) \\ % , \ r \in \{1,\dots,R\} \\
\mathrm{NPI}_{r,d} =& \ \mathrm{GlobalMean} + \boldsymbol{\alpha_\mathrm{NPIs}} \mathrm{NPIs}_{r,d} \\ & \qquad + \alpha_\mathrm{Wearing} * \mathrm{Wearing}_{r,d} \\ & \qquad + \alpha_\mathrm{Mobility} * \mathrm{Mobility}_{r,d} \\
% \P{\mathrm{MeanInfected}_{r,d}} =& \ \mathcal{N}(\log(\mathrm{Infected}_{r,d}); \mathrm{NPI}_d + \mathrm{MeanInfected}_{r,d-1}, \\ & \qquad \exp(\mathrm{InfectedNoise}_r) ) \\ % , \\ & \ d \in \{2,\dots,D\}  \\
% \P{\mathrm{Infected}_{r,d}} =& \mathrm{NegativeBinomial}(\exp(\psi),  \mathrm{Infected}_{r,d})
% \\
\mathrm{NewMean}_{r,d} =& \mathrm{NPI}_{r,d} + \mathrm{MeanInfected}_{r,d-1} \\
\Pc{\mathrm{MeanInfected}_{r,d}}{\mathrm{NewMean}_{r,d},\mathrm{InfectedNoise}_r} =& \ \mathcal{N}(\mathrm{MeanInfected}_{r,d}; \mathrm{NewMean}_{r,d}, \\ & \qquad \exp(\mathrm{InfectedNoise}_r) ) \\ % , \\ & \ d \in \{2,\dots,D\}  \\
\Pc{\mathrm{Infected}_{r,d}}{\psi, \mathrm{MeanInfected}_{r,d}} =& \mathrm{NegativeBinomial}(\mathrm{Infected}_{r,d};\exp(\psi),  \\ & \qquad  \exp(\psi - \mathrm{MeanInfected}_{r,d}) + 1 \\ & \qquad + 10^{-7})
\\
\end{split}
\end{equation}

where $r \in \{1, \ldots, R\}$ and $d \in \{2, \ldots, D\}$ (note that the initial time step, $d=1$, is dealt with separately from subsequent time steps).
We initialise our approximate posterior distribution $Q$ as follows:

\begin{equation}
\label{eq:covid model_Q}
\begin{split}
\Q{\boldsymbol{\alpha_\mathrm{NPIs}}} =& \  \mathcal{N}(\alpha_\mathrm{NPIs}; \boldsymbol{0}_9,\boldsymbol{I}_9), \\
\Q{\alpha_\mathrm{Wearing}} =& \  \mathcal{N}(\alpha_\mathrm{Wearing}; 0,1), \\
\Q{\alpha_\mathrm{Mobility}} =& \ \mathcal{N}(\alpha_\mathrm{Mobility}; 0, 1)) \\
\Q{\mathrm{GlobalMean}} =& \ \mathcal{N}(\mathrm{GlobalMean}; 0,1) \\ 
\Q{\mathrm{InitialSizeMean}} =& \ \mathcal{N}(\mathrm{InitialSizeMean}; 0,1) \\ 
\Q{\mathrm{InfectedNoiseMean}} =& \ \mathcal{N}(\mathrm{InfectedNoiseMean}; 0,1) \\ 
\Q{\mathrm{MeanInfected}_{r,1}} =& \ \mathcal{N}(\mathrm{MeanInfected}_{r,1}; 0,1) \\ % , \ r \in \{1,\dots,R\}  \\
\Q{\mathrm{InfectedNoise}_r} =& \ \mathcal{N}(\mathrm{InfectedNoise}_r; 0,1) \\ % , \ r \in \{1,\dots,R\} \\
\Q{\psi_r} =& \ \mathcal{N}(\psi_r; 0,1) \\ % , \ r \in \{1,\dots,R\} \\
\Q{\mathrm{MeanInfected}_{r,d}} =& \ \mathcal{N}(\mathrm{MeanInfected}_{r,d}; 0, 1) \\ % , \ d \in \{2,\dots,D\}  \\
\\
\end{split}
\end{equation}

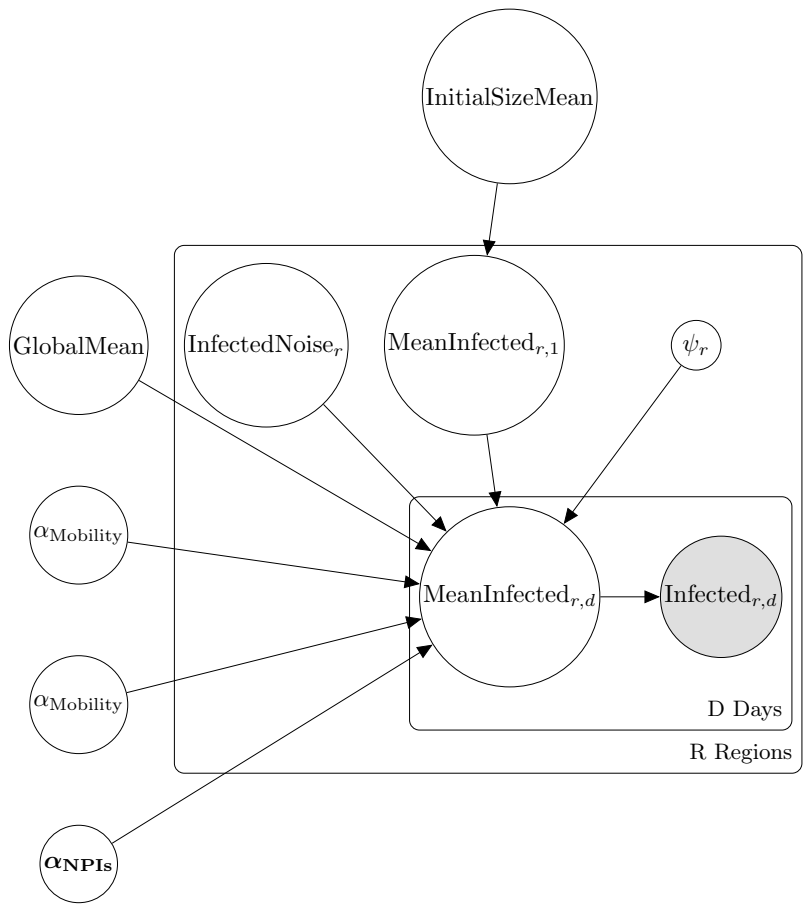
\begin{figure}[!ht]
\begin{center}
\resizebox{0.7\textwidth}{!}{%
 \begin{tikzpicture}
   % nodes

   \node[obs] (infected) {\(\mathrm{Infected}_{r,d}\)};%

   \node[latent, left=of radon] (MeanInfected) {\(\mathrm{MeanInfected}_{r,d}\)};   
   
   \node[latent, above=of MeanInfected, xshift=-0.5cm] (MeanInfectedInit) {\(\mathrm{MeanInfected}_{r,1}\)}; %

   \node[latent, left=of MeanInfectedInit, xshift=0.5cm] (InfectedNoise) {\(\mathrm{InfectedNoise}_{r}\)}; %

  \node[latent, right=of MeanInfectedInit, xshift=0.5cm] (psi) {\(\psi_{r}\)}; %

   \node[latent, above=of MeanInfectedInit, xshift=0.5cm] (InitialSizeMean) {\(\mathrm{InitialSizeMean}\)}; %
   \node[latent, left=of InfectedNoise, xshift=0.5cm] (GlobalMean) {\(\mathrm{GlobalMean}\)}; %
    \node[latent, below=of GlobalMean]  (alphamobility) {\(\alpha_\mathrm{Mobility}\)}; 
   \node[latent, below=of alphamobility]  (alphawearing) {\(\alpha_\mathrm{Mobility}\)};
   \node[latent, below=of alphawearing]  (alphanpis) {\(\boldsymbol{\alpha_\mathrm{NPIs}}\)};
   % plate
    \plate [] {platedays} {(infected)(MeanInfected)} {\(\mathrm{D}\) Days}; %
    \plate [] {plateregions} {(MeanInfectedInit)(InfectedNoise)(psi)(platedays)} {\(\mathrm{R}\) Regions}; 

   %edges
    \edge {MeanInfected} {infected}
    \edge {MeanInfectedInit, InfectedNoise, GlobalMean, alphamobility, alphawearing, alphanpis, psi} {MeanInfected}
    \edge {InitialSizeMean} {MeanInfectedInit}

    \end{tikzpicture} }
\caption{Graphical model for the Covid model}
\label{fig:covid_gm}
\end{center}
\end{figure}

\subsubsection{Reparameterized covid model}

The reparameterized covid model uses the following generative distribution

\begin{equation}
\label{eq:covid model reparam}
\begin{split}
% \P{\boldsymbol{\alpha_\mathrm{NPIs}}} =& \  \mathcal{N}(\alpha_\mathrm{NPIs}; \boldsymbol{0}_9,\boldsymbol{I}_9), \\
% \P{\alpha_\mathrm{Wearing}} =& \  \mathcal{N}(\alpha_\mathrm{Wearing}; 0,\frac{0.4}{10000}), \\
% \P{\alpha_\mathrm{Mobility}} =& \ \mathcal{N}(\alpha_\mathrm{Mobility}; 1.704, 0.44)) \\
% \P{\mathrm{GlobalMean}} =& \ \mathcal{N}(\mathrm{GlobalMean}; 1.07,0.6) \\ 
% \P{\mathrm{InitialSizeMean}} =& \ \mathcal{N}(\mathrm{InitialSizeMean}; \log(1000),0.5) \\ 
% \P{\mathrm{InfectedNoiseMean}} =& \ \mathcal{N}(\mathrm{InfectedNoiseMean}; \log(0.01),0.25) \\ 
% \P{\mathrm{MeanInfected}_{r,1}} =& \ \mathcal{N}(\mathrm{MeanInfected}_{r,1}; \mathrm{InitialSizeMean},0.5) \\ % , \ r \in \{1,\dots,R\}  \\
% \P{\mathrm{InfectedNoise}_r} =& \ \mathcal{N}(\mathrm{InfectedNoise}_r; \mathrm{InfectedNoiseMean},0.25) \\ % , \ r \in \{1,\dots,R\} \\
% \P{\psi_r} =& \ \mathcal{N}(\psi_r; 0,1) \\ % , \ r \in \{1,\dots,R\} \\
% \mathrm{NPI} =& \ \mathrm{GlobalMean} + \boldsymbol{\alpha_\mathrm{NPIs}} \mathrm{NPIs}_{r,d} \\ & \qquad + 10000*\alpha_\mathrm{Wearing} * \mathrm{Wearing}_{r,d} + \alpha_\mathrm{Mobility} * \mathrm{Mobility}_{r,d} \\
% \P{\mathrm{MeanInfected}_{r,d}} =& \ \mathcal{N}(\log(\mathrm{Infected}_{r,d}); \mathrm{NPI}_d + \mathrm{MeanInfected}_{r,d-1}, \\ & \qquad \exp(\mathrm{InfectedNoise}_r) ) \\ % , \\ & \ d \in \{2,\dots,D\}  \\
% \P{\mathrm{Infected}_{r,d}} =& \mathrm{NegativeBinomial}(\exp(\psi),  \mathrm{Infected}_{r,d})
% \\
\P{\boldsymbol{\alpha_\mathrm{NPIs}}} =& \  \mathcal{N}(\alpha_\mathrm{NPIs}; \boldsymbol{0}_9,\boldsymbol{I}_9), \\
\P{\alpha_\mathrm{Wearing}} =& \  \mathcal{N}(\alpha_\mathrm{Wearing}; 0,\frac{0.4}{10000}), \\
\P{\alpha_\mathrm{Mobility}} =& \ \mathcal{N}(\alpha_\mathrm{Mobility}; 1.704, 0.44)) \\
\P{\mathrm{GlobalMean}} =& \ \mathcal{N}(\mathrm{GlobalMean}; 1.07,0.6) \\ 
\P{\mathrm{InitialSizeMean}} =& \ \mathcal{N}(\mathrm{InitialSizeMean}; \log(1000),0.5) \\ 
\P{\mathrm{InfectedNoiseMean}} =& \ \mathcal{N}(\mathrm{InfectedNoiseMean}; \log(0.01),0.25) \\ 
\Pc{\mathrm{MeanInfected}_{r,1}}{\mathrm{InitialSizeMean}} =& \ \mathcal{N}(\mathrm{MeanInfected}_{r,1};  \\ & \qquad  \mathrm{InitialSizeMean},0.5) \\ % , \ r \in \{1,\dots,R\}  \\
\Pc{\mathrm{InfectedNoise}_r}{\mathrm{InfectedNoiseMean}} =& \ \mathcal{N}(\mathrm{InfectedNoise}_r;  \\ & \qquad \mathrm{InfectedNoiseMean},0.25) \\ % , \ r \in \{1,\dots,R\} \\
\P{\psi_r} =& \ \mathcal{N}(\psi_r; 0,1) \\ % , \ r \in \{1,\dots,R\} \\
\mathrm{NPI}_{r,d} =& \ \mathrm{GlobalMean} + \boldsymbol{\alpha_\mathrm{NPIs}} \mathrm{NPIs}_{r,d} \\ & \qquad + 10000 * \alpha_\mathrm{Wearing} * \mathrm{Wearing}_{r,d} \\ & \qquad + \alpha_\mathrm{Mobility} * \mathrm{Mobility}_{r,d} \\
% \P{\mathrm{MeanInfected}_{r,d}} =& \ \mathcal{N}(\log(\mathrm{Infected}_{r,d}); \mathrm{NPI}_d + \mathrm{MeanInfected}_{r,d-1}, \\ & \qquad \exp(\mathrm{InfectedNoise}_r) ) \\ % , \\ & \ d \in \{2,\dots,D\}  \\
% \P{\mathrm{Infected}_{r,d}} =& \mathrm{NegativeBinomial}(\exp(\psi),  \mathrm{Infected}_{r,d})
% \\
\mathrm{NewMean}_{r,d} =& \mathrm{NPI}_{r,d} + \mathrm{MeanInfected}_{r,d-1} \\
\Pc{\mathrm{MeanInfected}_{r,d}}{\mathrm{NewMean}_{r,d},\mathrm{InfectedNoise}_r} =& \ \mathcal{N}(\mathrm{MeanInfected}_{r,d}; \mathrm{NewMean}_{r,d}, \\ & \qquad \exp(\mathrm{InfectedNoise}_r) ) \\ % , \\ & \ d \in \{2,\dots,D\}  \\
\Pc{\mathrm{Infected}_{r,d}}{\psi, \mathrm{MeanInfected}_{r,d}} =& \mathrm{NegativeBinomial}(\mathrm{Infected}_{r,d};\exp(\psi),  \\ & \qquad  \exp(\psi - \mathrm{MeanInfected}_{r,d}) + 1 \\ & \qquad + 10^{-7})
\\
\end{split}
\end{equation}

We initialise the reparameterized approximate posterior distribution $Q$ as follows:

\begin{equation}
\label{eq:covid model_Q raparam}
\begin{split}
\Q{\boldsymbol{\alpha_\mathrm{NPIs}}} =& \  \mathcal{N}(\alpha_\mathrm{NPIs}; \boldsymbol{0}_9,\boldsymbol{I}_9), \\
\Q{\alpha_\mathrm{Wearing}} =& \  \mathcal{N}(\alpha_\mathrm{Wearing}; 0,\frac{1}{10000}), \\
\Q{\alpha_\mathrm{Mobility}} =& \ \mathcal{N}(\alpha_\mathrm{Mobility}; 0, 1)) \\
\Q{\mathrm{GlobalMean}} =& \ \mathcal{N}(\mathrm{GlobalMean}; 0,1) \\ 
\Q{\mathrm{InitialSizeMean}} =& \ \mathcal{N}(\mathrm{InitialSizeMean}; 0,1) \\ 
\Q{\mathrm{InfectedNoiseMean}} =& \ \mathcal{N}(\mathrm{InfectedNoiseMean}; 0,1) \\ 
\Q{\mathrm{MeanInfected}_{r,1}} =& \ \mathcal{N}(\mathrm{MeanInfected}_{r,1}; 0,1) \\ % , \ r \in \{1,\dots,R\}  \\
\Q{\mathrm{InfectedNoise}_r} =& \ \mathcal{N}(\mathrm{InfectedNoise}_r; 0,1) \\ % , \ r \in \{1,\dots,R\} \\
\Q{\psi_r} =& \ \mathcal{N}(\psi_r; 0,1) \\ % , \ r \in \{1,\dots,R\} \\
\Q{\mathrm{MeanInfected}_{r,d}} =& \ \mathcal{N}(\mathrm{MeanInfected}_{r,d}; 0, 1) \\ % , \ d \in \{2,\dots,D\}  \\
\\
\end{split}
\end{equation}

\section{Train-Test Splits for Experimental Evaluation}

For our experimental evaluation, we carefully constructed train-test splits for each dataset to ensure robust assessment of model performance. Below we detail the specific splits used for each model:

\subsection{Bus Breakdown Dataset}
The NYC Bus Breakdown dataset was subsampled to include $Y = 2$ years and $B = 3$ boroughs. For each borough-year combination, we constructed a training set of $I = 150$ delayed buses and an equally-sized test set of an additional 150 delayed buses for calculating predictive log-likelihood. Each delay was uniquely identified by the year, borough, and index.

\subsection{MovieLens Dataset}
From the MovieLens100K dataset, we constructed a training set consisting of $N = 5$ films and $M = 300$ users. An equally sized but completely disjoint subset of users was held out as a test set for evaluating predictive log-likelihood. All ratings were binarized, with user ratings of $\{0, 1, 2, 3\}$ mapped to 0 (dislikes) and ratings of $\{4, 5\}$ mapped to 1 (likes).

\subsection{Bird Occupancy Dataset}
For the North American Breeding Bird Survey data, we created a training set containing $J = 12$ bird species, $M = 6$ years, and $I = 200$ routes. The test set for predictive log-likelihood calculation used the same species and years but a different set of 100 routes. We obtained $R = 5$ repeated samples from each route by taking the unit step function of the sum of every 10 readings along each route.

\subsection{Radon Dataset}
We used a subset of the radon measurement dataset consisting of $S = 4$ states and $R = 300$ readings per state. This was split evenly, with 150 readings per state forming the training set and the other 150 readings per state constituting the test set for evaluating predictive log-likelihood.

\subsection{Covid Dataset}
The Covid dataset contained $D = 137$ daily Covid-19 infection measurements from $R = 92$ regions worldwide. The first 109 days of data were used as the training set, while the remaining 28 days formed the test set for calculating predictive log-likelihood. This temporal split allows us to evaluate the model's predictive performance on future infection rates.

%%%%%%%%%%%%%%%%%%%%%%%%%%%%%%%%%%%%%%%%%%%%%%%%%%%%%%%%%%%%

\end{document}